\documentclass{article}

\usepackage{style}

\usepackage[authoryear]{natbib}

\usepackage[utf8]{inputenc} %
\usepackage[T1]{fontenc}    %
\usepackage{hyperref}       %
\usepackage{url}            %
\usepackage{enumitem}
\usepackage{booktabs}       %
\usepackage{amsfonts}       %
\usepackage{nicefrac}       %
\usepackage{microtype}      %
\usepackage{xcolor}         %
\usepackage{comment}
\usepackage{titletoc} %

\usepackage{amsmath} 
\usepackage{amssymb}
\usepackage{amsthm}
\usepackage{graphicx}   
\usepackage{adjustbox}
\usepackage{wrapfig}
\usepackage{dsfont}
\usepackage{algorithm}
\usepackage[noend]{algpseudocode}
\usepackage{footnote}

\usepackage{subcaption}

\usepackage{makecell}
\usepackage{multirow}
\usepackage{array}

\usepackage{mathrsfs} 
\usepackage{cleveref}
\usepackage{thm-restate}

\theoremstyle{plain}
\newtheorem{definition}{Definition}[section]
\newtheorem{theorem}{Theorem}[section]  %
\newtheorem{lemma}[theorem]{Lemma}      %
\newtheorem{assumption}[theorem]{Assumption}
\newtheorem{corollary}[theorem]{Corollary}  %
\newtheorem{example}{Example}[section]
\newtheorem{remark}{Remark}[section]

\usepackage{amsmath,amsfonts,bm}

\def\eqref#1{equation~\ref{#1}}

\def\1{\bm{1}}

\DeclareMathAlphabet{\mathsfit}{\encodingdefault}{\sfdefault}{m}{sl}
\SetMathAlphabet{\mathsfit}{bold}{\encodingdefault}{\sfdefault}{bx}{n}

\newcommand{\R}{\mathbb{R}}

\DeclareMathOperator*{\argmax}{arg\,max}
\DeclareMathOperator*{\argmin}{arg\,min}

\usepackage{xcolor}
\usepackage{tocbibind}
\usepackage{tocloft}

\usepackage{dsfont}

\definecolor{darkgreen}{rgb}{0.0, 0.5, 0.0} %

\newcommand{\wcal}{\mathcal{W}}
\newcommand{\acal}{\mathcal{A}}
\newcommand{\zcal}{\mathcal{Z}}
\newcommand{\Rd}{\mathbb{R}^d}
\newcommand{\datadist}{\mu_z^{\otimes n}}
\newcommand{\Eof}[2][]{\mathbb{E}_{#1} \left[ #2 \right]}

\newcommand{\normof}[1]{\left\Vert #1 \right\Vert}

\newcommand{\risk}{\mathcal{R}}
\newcommand{\er}{\widehat{\mathcal{R}}_{S}}
\newcommand{\setof}[1]{\left\{ #1 \right\} }
\newcommand{\der}{\mathrm{d}}

\renewcommand{\paragraph}[1]{{\vspace{0.3mm}\noindent \bf #1}.}

\newcommand{\upperbox}{\overline{\dim}_{\mathrm{B}}}
\newcommand{\rad}{\mathbf{Rad}_{S}}
\newcommand{\landau}[1]{\mathcal{O}\left( #1 \right)}

\title{Stability, Complexity and Data-Dependent Worst-Case Generalization Bounds}

\author{\name Mario Tuci\thanks{$^{,\dagger}$ Authors contributed equally.} \email{mario.tuci@inria.fr} \\
      \addr{INRIA, CNRS, Département d'Informatique de l'Ecole Normale Supérieure / PSL, France}  \\
      \addr{Department of Computing, Imperial College London, United Kingdom}\\[0.5em]
      \name Lennart Bastian$^*$ \email{lennart.bastian@tum.de} \\
      \addr{School of Computation and Technology, Technical University of Munich, Germany\\
      Munich Center for Machine Learning, Germany} \\[0.5em]
      \name Benjamin Dupuis$^*$ \email{benjamin.dupuis@inria.fr} \\
      \addr{INRIA, CNRS, Département d'Informatique de l'Ecole Normale Supérieure / PSL, France} \\[0.5em]
      \name Nassir Navab \email{nassir.navab@tum.de} \\
      \addr{School of Computation and Technology, Technical University of Munich, Germany \\
      Munich Center for Machine Learning, Germany} \\[0.5em]
      \name Tolga Birdal$^\dagger$ \email{t.birdal@imperial.ac.uk} \\
      \addr{Department of Computing, Imperial College London, United Kingdom} \\[0.5em]
   \name Umut \c{S}im\c{s}ekli$^\dagger$ \email{umut.simsekli@inria.fr} \\
      \addr{INRIA, CNRS, Département d'Informatique de l'Ecole Normale Supérieure / PSL, France}
 }

\begin{document}

\maketitle

\begin{abstract}
Providing generalization guarantees for stochastic optimization algorithms remains a key challenge in learning theory.
Recently, numerous works demonstrated the impact of the geometric properties of optimization trajectories on generalization performance. 
These works propose worst-case generalization bounds in terms of various notions of intrinsic dimension and/or topological complexity, which were found to empirically correlate with the generalization error.
However, most of these approaches involve intractable mutual information terms, which limit a full understanding of the bounds.
In contrast, some authors built on algorithmic stability to obtain worst-case bounds involving geometric quantities of a combinatorial nature, which are impractical to compute.
In this paper, we address these limitations by combining empirically relevant complexity measures with a framework that avoids intractable quantities.
To this end, we introduce the concept of \emph{random set stability}, tailored for the data-dependent random sets produced by stochastic optimization algorithms. 
Within this framework, we show that the worst-case generalization error can be bounded in terms of (i) the random set stability parameter and (ii) empirically relevant, data- and algorithm-dependent complexity measures of the random set.
Moreover, our framework improves existing topological generalization bounds by recovering previous complexity notions without relying on mutual information terms.
Through a series of experiments in practically relevant settings, we validate our theory by evaluating the tightness of our bounds and the interplay between topological complexity and stability. 
\end{abstract}

\section{Introduction}
\label{sec:introduction}

Explaining the generalization capabilities of modern deep learning models is still an open problem and an active research area \citep{zhang2017understanding, zhang2021understanding}. 
Classically, supervised learning tasks can be framed as a population risk minimization problem, defined as:
\begin{align}
    \label{eq:population_risk_minimization}
    \textstyle \min_{w \in \mathbb{R}^d} \left\{ \mathcal{R}(w) := \mathbb{E}_{z \sim \mu_{z}} [ \ell(w,z)] \right\},
\end{align}
where $z \in \mathcal{Z}$ denotes the data, following a probability distribution $\mu_z$ on the data space $\mathcal{Z}$, and $\ell: \mathbb{R}^d \times \mathcal{Z} \to \mathbb{R}$ is the loss function.
In practice, we have $\zcal = \mathcal{X} \times \mathcal{Y}$, where $\mathcal{X}$ denotes a feature space and $\mathcal{Y}$ is a label space.
Since $\mu_{z}$ is unknown, it is in general not possible to directly solve \Cref{eq:population_risk_minimization}; instead, we typically solve the \emph{empirical risk minimization problem}: for a dataset $S := (z_1,\dots,z_n) \sim \datadist$ sampled i.i.d. from $\mu_z$, it corresponds to
\begin{align}
 \label{eq:empricial_risk_minimization}
  \textstyle \min_{w \in \mathbb{R}^d} \big\{ \widehat{\mathcal{R}}_{S}(w) := \frac{1}{n} \sum_{i=1}^n \ell(w,z_i) \big\}.
\end{align}
To evaluate the validity of this replacement,
it is essential to quantify the discrepancy between the two objectives. Thus, a central quantity of interest in learning theory is the \emph{generalization error}, defined as $G_S(w) := \risk(w) - \er(w)$.

To quantify the generalization error, a large body of work analyzes the problem at the level of a single weight vector $w$ by modeling the learning algorithm as a mapping $(S, U) \mapsto w_{S,U}$, where $S \in \mathcal{Z}^n$ represents the dataset and $U$ captures the algorithmic randomness. 
This viewpoint underlies, for example, information-theoretic approaches \citep{xu2017information,steinke2020reasoning,pensia2018generalization}, as well as the algorithmic stability frameworks \citep{bousquet_stability_2002,hardt2016train,feldman_high_probability_2019}.
While these approaches provide generalization guarantees in various settings, the choice of which iterate to analyze remains unclear in the absence of a stopping criterion, and consequently cannot capture the full behavior of the generalization error.

In contrast to focusing on a single weight vector, recent work has highlighted the role of the \emph{parameter trajectory} -- the sequence of iterates generated by the learning algorithm for solving \Cref{eq:empricial_risk_minimization} -- in explaining the generalization behavior of modern machine learning models \citep{xu2024towards,fu2024learning,andreeva2024topological}. Indeed, since the empirical risk can contain many local minima, the trajectory surrounding a local minimum offers a concise way to assess its quality. The trajectory itself inherently reflects the influence of the optimization algorithm, the choice of hyperparameters, and the dataset, factors that are essential for deriving meaningful generalization bounds.

To capture the geometric information of the \emph{parameter trajectory}, several authors have proposed analyzing the \emph{worst-case} generalization error along the entire trajectory, rather than relying on a single weight vector. 
Due the dependence of the parameter trajectory on the random variables $S$ (dataset) and $U$ (algorithmic noise), numerous studies on worst-case generalization bounds adopted the formulation of \emph{data-dependent random sets} \citep{simsekli2020hausdorff,dupuis2023generalization,andreeva2023metric}. In line with these previous works, we formalize a learning algorithm as a mapping,
\begin{align}
    \label{eq:algorithm-formulation}
   \textstyle  \mathcal{A} : \bigcup_{n \geq 1}\mathcal{Z}^n \times \Omega \longrightarrow \mathrm{CL}(\mathbb{R}^d) := \{\text{closed subsets in }\Rd \}, \quad \acal : (S,U) \longmapsto \wcal_{S,U},
\end{align}
where $S \in \zcal^n$ is the dataset and $U\in \Omega$ is an independent random variable representing the algorithm randomness. This formalization includes (but is not limited to) the following examples.

\begin{example}
    \label{example:trajectory}
    Consider a stochastic optimizer $w_{k+1} = F(w_k, S, U)$ ($U$ is the batch noise) and $K_1, K_2 \in \mathds{N}^\star$. The parameter trajectory is defined as $ \acal(S,U) = \wcal_{S,U} := \setof{w_k,~ K_1 \leq k \leq K_2}$.
\end{example}

\begin{example}
    \label{example:continuous-time-trajectory}
    Consider a dynamics of the form $\der W_t = -\nabla \er(W_t) \der t + \sigma \der B_t$, where $U := (B_t)_{t\geq 0}$ is a Brownian motion. Let $T> 0$, the parameter trajectory is $\wcal_{S,U} := \setof{W_t,~ t \in [0,T]}$.
\end{example}  

The following Examples \ref{example:singleton-algorithm} and \ref{example:data-independent-W} illustrate that this formulation also includes classical learning-theoretic frameworks.

\begin{example}
    \label{example:singleton-algorithm}
    Singleton bounds correspond to the case where we choose $\acal(S,U) := \{ w_{S,U} \}$.
\end{example}

\begin{example}
    \label{example:data-independent-W}
    Classical worst-case (uniform) generalization error over a data-independent hypothesis set $\wcal$ \citep{shalev2014understanding} correspond to a constant mapping $\acal(S,U) = \wcal$.
\end{example}

In this general setting, the main quantity of interest is the \emph{worst-case generalization error} over $\wcal_{S,U}$, 
\begin{align}
\label{eq:worst-case-gen}
\textstyle G_S(\wcal_{S,U}) := \sup_{w \in \mathcal{W}_{S,U}} \big( \mathcal{R}(w) - \widehat{\mathcal{R}}_S(w) \big).
\end{align}
In the case of \Cref{example:data-independent-W}, this problem has been classically analyzed through the Rademacher complexity over $\wcal$ \citep{bartlett2002rademacher}. However, as noted by \citet{dupuis2023generalization}, these classical techniques break down for data-dependent random sets.

To overcome this limitation, many studies have employed information-theoretic techniques, particularly within the “fractal-based” literature \citep{simsekli2020hausdorff,birdal2021intrinsic}. A unifying perspective recently emerged with the PAC-Bayesian theory for random sets \citep{dupuis2024uniform}, which was recently employed by \citet{andreeva2024topological} to establish generalization bounds based on novel topological complexity measures. Informally, all these bounds are of the following form\footnote{We use the notation $\lesssim$ in informal statements where absolute constants have been omitted.}:
\begin{align}
\label{eq:informal-topological-bounds}
{\textstyle\sup_{w \in \wcal_{S,U}} } \big( \risk(w) - \er(w) \big) \lesssim \sqrt{\frac{\mathbf{C}(\wcal_{S,U}) + \mathrm{IT} + \log(1/\zeta)}{n}},
\end{align}
with probability at least $1 - \zeta$. The term $\mathrm{IT}$ is an \emph{information-theoretic} (IT) term, typically the \emph{total} mutual information between the dataset $S$ and the set $\wcal_{S,U}$
The aforementioned bounds differ in the choice of complexity measure $\mathbf{C}(\mathcal{W}_{S,U})$, but all include an IT term. For concrete examples, we refer the reader to \citep{dupuis2023generalization, dupuis2024uniform, andreeva2024topological}. The presence of such IT term is a major drawback of these studies, as it is computationally intractable and not well-understood in the general case \citep{dupuis2024uniform}, and they can potentially be infinite.

On the other hand, \citet{foster2019hypothesis} adopted a conceptually different approach by extending the notion of algorithmic stability \citep{bousquet_stability_2002} to data-dependent hypothesis sets. Despite allowing a different point of view on \Cref{eq:worst-case-gen}, their assumption does not explicitly take into account the algorithm randomness $U$, which is known to be paramount for deriving stability-based bounds \citep{hardt2016train}.
Moreover, \citet{foster2019hypothesis} use a modified notion of Rademacher complexity whose evaluation requires all sets\footnote{$\mathcal{W}_{S}$ denotes in the setting of \citep{foster2019hypothesis} a data dependent hypothesis set. The authors do not account for algorithmic randomness $U$. For $U$ constant both notation can be seen as equivalent.} of the form $\wcal_{S^\sigma}$, where $S_\sigma = (z_{1, \sigma_1}, \dots, z_{n, \sigma_n})$, with $S_1 := (z_{1,1}, \dots, z_{n, 1}), S_{(-1)} := (z_{1, -1}, \dots, z_{n, -1}) \sim \datadist$ and $\sigma \in \{-1, 1\}^n$.
Hence, the sets
$\wcal_{S^\sigma}$ differ from the empirically accessible set $\mathcal{W}_{S,U}$ and evaluating all sets $\wcal_{S^\sigma}$ would require running the training algorithm exponentially many times in $n$, rendering this approach impractical. 
A similar construction has been used by \cite{sachs2023generalization} in the definition of the algorithmic-dependent Rademacher complexity.

\begin{figure}[t]
    \centering
    \begin{subfigure}[t]{0.49\textwidth}
        \includegraphics[width=\textwidth]{
        ./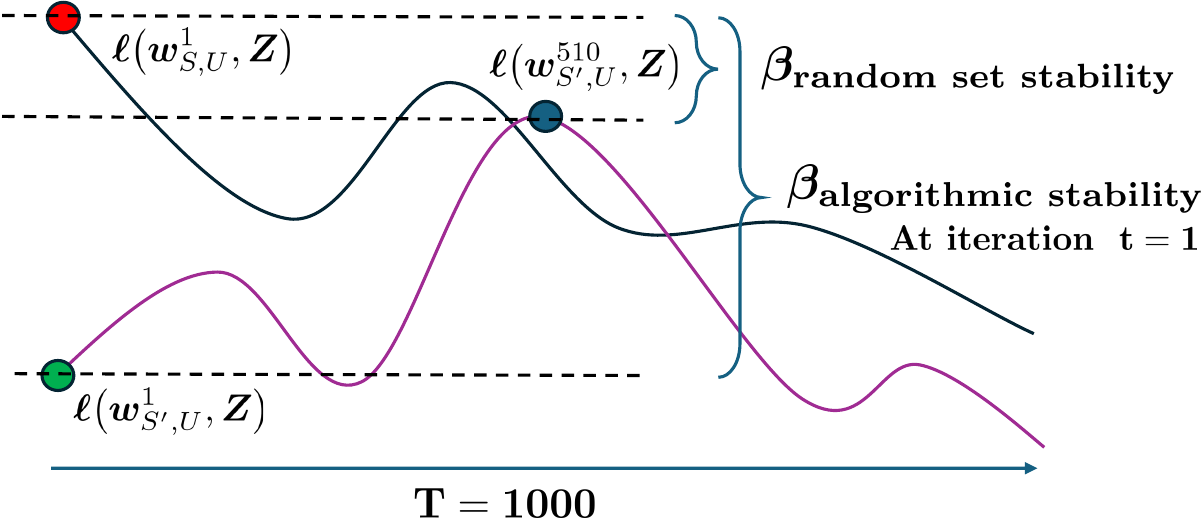}
        \label{fig:subfig_a}
    \end{subfigure}
    \hfill
    \begin{subfigure}[t]{0.49\textwidth}
        \includegraphics[width=\textwidth]{./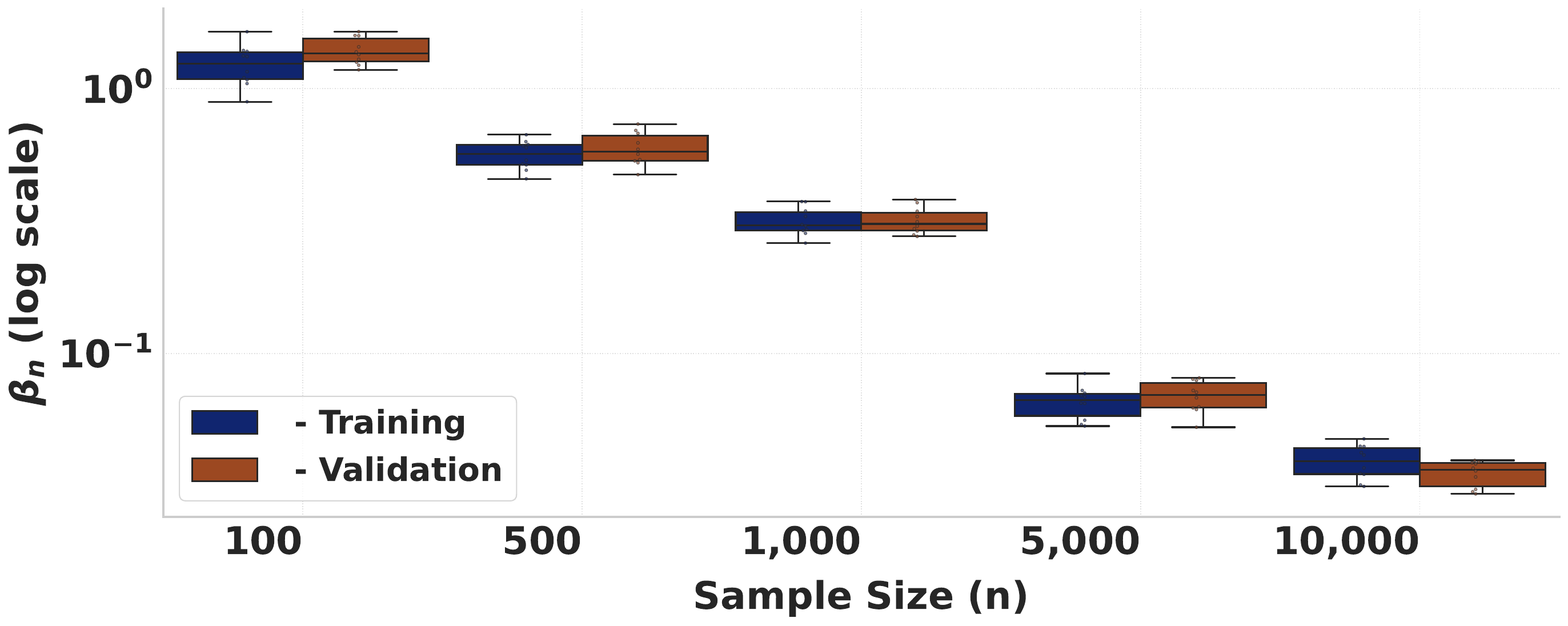}
        \label{fig:subfig_b}
    \end{subfigure}
    \vskip -0.3cm
    \caption{\textit{(Left)} Evolution of the loss function $\ell(\cdot, Z)$ for a fixed sample $Z \in \mathcal{Z}$ over $T=1000$ iterations, for two neighboring datasets $S,S' \in \zcal^n$. While the classical notion of algorithmic stability measures the error at a specific iteration $t$, our stability notion extends this perspective to the entire training trajectory.
    \textit{(Right)} Numerical estimation of our new random set stability parameter ($\beta_n$) as $n$ increases. The experiments demonstrate that $\beta_n$ decreases with larger $n$.
}
    \label{fig:my_figure}
    \vskip -0.3cm
\end{figure}

\paragraph{Contributions} In this study, we propose to address the aforementioned issues by introducing a novel framework for deriving worst-case generalization bounds on data-dependent random sets, which are empirically relevant and do not contain intractable mutual information terms.
Inspired by \citet{foster2019hypothesis}, we adopt a stability-based approach, which we adapt to exploit the algorithm randomness in an explicit way. Based on our new assumption, we show that $G_S(\wcal_{S,U})$ can be upper bounded in terms of the stability parameter and a Rademacher complexity term. This allows us to take advantage of the best of both worlds by (1) avoiding the intricate information-theoretic (IT) terms and (2) relating the generalization error to topologically meaningful complexity measures, hence, providing IT-terms free versions of the topological generalization bounds of \citet{birdal2021intrinsic,andreeva2024topological}.
Our contributions are detailed below.
\begin{itemize}[leftmargin=*,topsep=0pt]
    \item We introduce the notion of \emph{random set stability}, 
    specifically designed for stochastic learning algorithms.
    More precisely, we define a new stability parameter $\beta_n$ measuring how much the random set $\wcal_{S,U}$ changes if we replace one or more data points in $S$, in a sense that is formally introduced in \Cref{ass:random-trajectory-stability} below.

    Moreover, we relate $\beta_n$ to classical stability notions, 
    providing a systematic procedure to establish random set stability in diverse settings. 

   \item We derive an expected worst-case generalization bound in terms of a Rademacher complexity term evaluated over the empirically relevant set $\mathcal{W}_{S,U}$ and the stability parameter.
    \item We apply our framework to obtain generalization bounds based on the fractal and topological complexity measures introduced in \citep{birdal2021intrinsic,andreeva2024topological} without any intractable mutual information terms. These bounds read as follows:
    \begin{align*}
        \textstyle \Eof{\sup_{w \in \wcal_{S,U}} \left( \risk(w) - \er(w) \right)} 
        \lesssim \beta_n^{1/3} \left( 1 + \Eof{\sqrt{\log \mathbf{C}(\wcal_{S,U})}} \right),
    \end{align*}
    where $\beta_n$ is the stability parameter and $\mathbf{C}(\wcal_{S,U})$ is the complexity measure (see \Cref{sec:application}). Therefore, we provide the first fully computable topological bounds for practically used optimization algorithms, hence, addressing the main issues of the aforementioned previous works.
    \item We provide a systematic empirical investigation of the tightness of our bounds.   
    \item We empirically investigate the previously introduced topological quantities and their interplay with stability as the sample size $n$ varies, extending the correlation analyses of prior studies, and therefore providing evidence supporting the validity of our stability-based approach.

\end{itemize}
Our implementation is publicly available at \url{https://github.com/bastianlb/MI-free-topo-bounds}. Detailed proofs for all theoretical results are provided in the Appendix.

\section{Technical Background}
\label{sec:perliminaries}

In this section, we provide some technical background on algorithmic stability. One of our contributions, described in \Cref{sec:trajectory-stability}, is to extend these notions to data-dependent random sets.

Algorithmic stability is a fundamental concept in learning theory \citep{bousquet_stability_2002,bousquet_sharper_2020}. It has proven successful in providing generalization guarantees for widely used stochastic optimization algorithms \citep{feldman_high_probability_2019,zhu2023uniformintime,hardt2016train}. 
To position our work within existing stability-based bounds, we consider a variant of \emph{algorithmic stability}, called \emph{uniform argument stability} \citep{bassily_stability_2020}. As with most stability notions, it is defined with respect to a single iteration. Therefore, the learning algorithm used in the next definition corresponds to the setting of \Cref{example:singleton-algorithm} and can be seen as a randomized mapping $(S,U) \longmapsto \acal (S,U) \in \Rd$. 
In all the following, we say that two datasets $S,S'\in \zcal^n$ are neighboring datasets if they differ by one element, which we denote $S \simeq S'$.

\begin{definition}
    \label{def:parameter-stability}
    Let $\mathcal{A} : (S,U) \to \Rd$ be a randomized algorithm. $\mathcal{A}$ satisfies $\beta$-uniform argument stability if for any neighboring datasets $S\simeq S'$, we have $\Eof[U]{\normof{\mathcal{A}(S,U) - \mathcal{A}(S',U)} } \leq \beta$.
\end{definition}
If $\ell(w,z)$ is $L$-Lipschitz continuous in $w$, this implies the algorithmic stability \citep{hardt2016train}: 
\begin{align}
    \label{eq:algorithmic-stability}
    \sup_{S \simeq S'} \sup_{z \in \mathcal{Z}} \mathbb{E}_{U}\left[ \ell(\mathcal{A}(S,U), z) - \ell(\mathcal{A}(S',U), z) \right] \leq L \beta.
\end{align}
In the context of worst-case generalization bounds, where $\mathcal{W}_{S,U}$ is a set (i.e., not a singleton), one needs to specify the stability of the entire set $\mathcal{W}_{S,U}$.
To address this issue, a first step was taken by \citep{foster2019hypothesis}, who introduced the following notion of hypothesis set stability. 

\begin{definition}
    \label{def:foster-stability}
    A family $(\wcal_S)_{S\in\zcal^n} \in (\Rd)^{\zcal^n}$ of data-dependent hypothesis sets is $\beta$-uniformly stable when for any $S\simeq S'$ in $\zcal^n$ and $w \in \wcal_S$, there exists $w'\in \wcal_{S'}$ such that: 
    $$
     \textstyle   \sup_{z\in\zcal} |\ell(w,z) - \ell(w',z)| \leq \beta.
    $$
\end{definition}
Although this assumption has been analyzed in settings such as strongly convex optimization \citep{foster2019hypothesis}, the notion of \emph{hypothesis set stability} (Definition~\ref{def:foster-stability}) does not account for the presence of algorithmic randomness $U$ and therefore cannot be applied to the data-dependent random sets defined by \Cref{eq:algorithm-formulation}. In Section~\ref{sec:theoretical-results}, we thus extend \Cref{def:foster-stability} to data-dependent random sets.

\vspace{-1mm}
\section{Main Theoretical Results}
\vspace{-1mm}
\label{sec:theoretical-results}
In this section, we present our main theoretical contributions, which consist in deriving expected worst-case generalization bounds through a new concept of random set stability, described in \Cref{sec:trajectory-stability}. After discussing its validity, we present a key technical lemma and recover classical generalization bounds in the case of \Cref{example:singleton-algorithm,example:data-independent-W}. We then apply our framework to improve several data-dependent worst-case generalization bounds, including intrinsic dimension bounds \citep{simsekli2020hausdorff,birdal2021intrinsic} and topological bounds \citep{andreeva2024topological}.

\subsection{Random set stability}
\label{sec:trajectory-stability}
 
As explained above, the main drawback of \Cref{def:foster-stability} is that it does not explicitly account for the algorithm randomness (e.g., batch indices), while algorithmic randomness is known to be paramount for single-iterates stability bounds \citep{hardt2016train,zhu2023uniformintime}. 

To address this issue, we propose an improvement of the assumption of \citet{foster2019hypothesis}, accounting for the presence of randomness. The main difficulty that arises here is to give a meaning to the expressions such as ``for all $w \in \wcal_{S,U}$'', when $\wcal_{S,U}$ is a random set. To do so, we define the notion of data-dependent selection \citep{molchanov_theory_2017}.

\begin{definition}
    \label{def:selection}
    A data-dependent selection of $\wcal_{S,U}$ is a deterministic mapping
    $\omega: \mathrm{CL}(\Rd) \times \zcal^n \to \Rd$ such that $\omega(\wcal_{S,U}, S') \in \wcal_{S,U}$, almost surely. In particular, we assume the existence of a random variable $\omega_0(\wcal_{S,U}, S')$ such that, almost surely, $\omega_0(\wcal_{S,U}, S') \in \argmax_{w \in \wcal_{S,U}} G_{S'} (w)$, where we recall that $G_{S'}(w) := \risk(w) - \widehat{\mathcal{R}}_{S'}(w)$ for all $S' \in \zcal^n$.
\end{definition}

Note that the existence of such a random variable is ensured under mild measure-theoretic conditions \citep{molchanov_theory_2017}.
Equipped with this definition, we now introduce our stability assumption.

\begin{assumption}[Random set stability]
    \label{ass:random-trajectory-stability}
    Algorithm $\mathcal{A}$ is $\beta_n$-random set stable if for any $J\in \mathbb{N}^\star$ and any data-dependent selection $\omega$ of $\wcal_{S,U}$, there exists a map $\omega' : \mathrm{CL}(\mathbb{R}^d) \times \mathbb{R}^d \to \mathbb{R}^d$ such that:
    \begin{itemize}[noitemsep,topsep=-2pt]
        \item For any $S$, $U$ and $w\in\Rd$, $\omega' (\wcal_{S,U}, w) \in \wcal_{S,U}$ .
        \item For all $z \in \mathcal{Z}$ and two datasets $S, S' \in \zcal^n$ differing by $J$ elements we have:
        \begin{align*}
            \mathbb{E}_{U} [| \ell(\omega(\mathcal{W}_{S,U}, S), z) - \ell(\omega'(\mathcal{W}_{S', U},\omega(\mathcal{W}_{S,U},S)), z ) |] \leq \beta_n J.
        \end{align*}

    \end{itemize}
\end{assumption}

In the absence of algorithmic randomness (i.e., when $U$ is constant), \Cref{ass:random-trajectory-stability} is a particular case of \Cref{def:foster-stability}.
Stability assumptions are usually enounced for neighboring datasets; here, we present a variant with datasets differing by $J$ elements. These two variants are equivalent, and we choose this presentation to simplify subsequent proofs and notations.
The presence of the mapping $\omega'$ might be intriguing: $\omega'$ corresponds to the parameter denoted $w'$ in \Cref{def:foster-stability} and is denoted this way to make its dependence on the different random variables explicit. 

\paragraph{Applicability of random set stability}
The question naturally arises as to whether this assumption can be met and how it fits within the existing stability conditions. We answer this question with the following lemma, which shows that \Cref{ass:random-trajectory-stability} is implied by \Cref{def:parameter-stability}.

\begin{restatable}{lemma}{StabilityLemma}
    \label{lemma:euclidean-stability-implies-our-stability}
    Consider a fixed $K \in \mathbb{N}^\star$, $1\leq k \leq K$, and algorithms\footnote{$\mathcal{A}_k $ is an algorithm with values in $\Rd$, i.e., its output is the $k$-th iterate of the optimization algorithm.}
     $\mathcal{A}_k(S,U) := w_k$.
    Let $\mathcal{A}(S,U) = \wcal_{S,U} =  \{\mathcal{A}_k(S,U)\}_{k=1}^K$ be an algorithm in the sense of \Cref{eq:algorithm-formulation}. Assume that for all $k$, $\mathcal{A}_k$ is $\delta_k$-uniformly argument-stable in the sense of \Cref{def:parameter-stability} and that the loss $\ell(w,z)$ is $L$-Lipschitz-continuous in $w$. Then, $\mathcal{A}$ is random set stable with parameter at most $\beta_n = L\sum_{k=1}^K \delta_k$.
\end{restatable}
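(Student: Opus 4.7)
The natural strategy is to exploit that the iterates of $\mathcal{A}$ on $S$ and on $S'$ come in a canonical pairing (both indexed by $k\in\{1,\dots,K\}$), and to construct $\omega'$ by returning the point of $\wcal_{S',U}$ sitting closest to its second argument. Concretely, for any finite closed set $V\subset\Rd$ and any $w\in\Rd$, I would set $\omega'(V,w) \in \argmin_{v\in V}\normof{v-w}$, breaking ties by a fixed deterministic rule (e.g.\ lexicographic order on coordinates). By construction $\omega'(\wcal_{S',U},\cdot)\in\wcal_{S',U}$ almost surely, so $\omega'$ is a valid data-dependent selection in the sense of \Cref{def:selection}.

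\textbf{Key steps.} First, using the $L$-Lipschitzness of $\ell(\cdot,z)$, I would reduce the target bound on a loss difference to a bound on a parameter distance:
\begin{align*}
\Eof[U]{|\ell(\omega(\wcal_{S,U},S),z) - \ell(\omega'(\wcal_{S',U},\omega(\wcal_{S,U},S)),z)|} \leq L\,\Eof[U]{\normof{\omega(\wcal_{S,U},S) - \omega'(\wcal_{S',U},\omega(\wcal_{S,U},S))}},
\end{align*}
which also eliminates the supremum over $z$. Second, since $\wcal_{S,U}$ is finite, there exists a (random) index $k^\star=k^\star(S,U)\in\{1,\dots,K\}$ with $\omega(\wcal_{S,U},S)=w_{k^\star}(S,U)$; as $w_{k^\star}(S',U)\in\wcal_{S',U}$, the defining property of $\omega'$ gives
\begin{align*}
\normof{\omega(\wcal_{S,U},S) - \omega'(\wcal_{S',U},\omega(\wcal_{S,U},S))} \leq \normof{w_{k^\star}(S,U)-w_{k^\star}(S',U)} \leq \sum_{k=1}^K \normof{w_k(S,U)-w_k(S',U)},
\end{align*}
where the crude last inequality absorbs the random index $k^\star$ and decouples the bookkeeping from its law. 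Third, since $S$ and $S'$ differ by $J$ entries, I would introduce a chain of neighboring datasets $S=S_{(0)}\simeq S_{(1)} \simeq \cdots \simeq S_{(J)}=S'$ and apply the triangle inequality together with \Cref{def:parameter-stability} applied to each $\mathcal{A}_k$ on each neighboring pair, yielding
\begin{align*}
\Eof[U]{\normof{w_k(S,U)-w_k(S',U)}} \leq \sum_{j=1}^J \Eof[U]{\normof{w_k(S_{(j-1)},U)-w_k(S_{(j)},U)}} \leq J\delta_k.
\end{align*}
Summing over $k$ and combining with the two previous steps yields the claimed $L J \sum_{k=1}^K \delta_k = \beta_n J$.

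\textbf{Main obstacle.} The only delicate point is the measurability of the closest-point selector $\omega'(V,w)$ jointly in $(V,w)$, which is required so that $\omega'(\wcal_{S',U}, \omega(\wcal_{S,U},S))$ is a genuine random variable compatible with \Cref{def:selection}. In general this would invoke a measurable selection theorem for closed-valued random correspondences, but in our setting it simplifies dramatically because $\wcal_{S',U}$ has at most $K$ points: fixing any deterministic tie-breaking rule (e.g.\ the lexicographic minimum among the argmin set) turns $\omega'$ into a Borel map on $(\Rd)^K \times \Rd$, which is all that is needed. Beyond this, the calculation is essentially three lines of triangle inequality plus Lipschitzness, so I do not anticipate any additional technical difficulty.
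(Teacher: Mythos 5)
Your proposal is correct and follows essentially the same route as the paper's proof: construct $\omega'$ as an argmin selector over $\wcal_{S',U}$, note that $\omega(\wcal_{S,U},S)$ equals some iterate $\mathcal{A}_{k^\star}(S,U)$, compare it with the index-matched iterate $\mathcal{A}_{k^\star}(S',U)$, absorb the random index by summing over all $k$, and conclude via Lipschitzness, the triangle inequality over a chain of $J$ neighboring datasets, and the per-iterate stability $\delta_k$. The only (immaterial) difference is that you define $\omega'$ as the Euclidean nearest point while the paper takes the minimizer of $\sup_{z}|\ell(w,z)-\ell(w',z)|$; since the Lipschitz bound converts one criterion into the other, both yield the same final estimate $\beta_n J = LJ\sum_{k=1}^K\delta_k$, and your variant even slightly simplifies the measurability discussion.
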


In the context of \Cref{example:trajectory}, this result shows that \Cref{def:parameter-stability} implies random set stability. It can be noted that many stability-based studies rely on a Lipschitz assumption and prove uniform argument stability to derive algorithmic stability parameters \citep{hardt2016train}, as noted by \citet{bassily_stability_2020}. Therefore, we see that the conditions of Lemma \ref{lemma:euclidean-stability-implies-our-stability} can be satisfied in numerous cases, such as optimization of convex, smooth, and Lipschitz continuous functions \citep{hardt2016train}. Under these conditions, it is obtained that the stability parameter is of order $\delta_k = \mathcal{O}(k / n)$, hence, yielding random set stability with a parameter of order $\mathcal{O}(T^2 / n)$, in the worst case. 

\paragraph{Projected SGD}
Fix $T\in\mathbb{N}$.
Let us consider the projected SGD recursion on a loss function $\ell$, which can be non-convex, i.e., given a dataset $S=(z_1,\dots, z_n) \in \zcal^n$, consider the following algorithm:
\begin{align}
\label{eqn:sgd_app}
    w_{k+1} = \Pi_{R} \left[ w_k - \eta_k \nabla \ell(w_k, z_{i_{k+1}}) \right], \quad \acal_{\mathrm{SGD}}(S,U) = \wcal_{S,U} := \{ w_1, \dots, w_T\}
\end{align}
where $i_k \sim \mathcal{U}( \{1,\dots, n\})$ are random indices, $U := \{ i_k\}_{k\geq 1}$, and $\Pi_{R}$ is the projection on the ball $\bar{B}_R(0)$.
The next corollary uses \Cref{lemma:euclidean-stability-implies-our-stability} to establish the random set stability of SGD for Lipschitz and smooth losses, and is an adaptation of \citep[Theorem 3.12]{hardt2016train} to our setting.

\begin{restatable}{corollary}{corProjectedSGD}
 \label{corollary:stability_sgd}
    Suppose that, for all $z$, $\ell(\cdot,z)$ is $L$-Lipschitz, $\nabla\ell(\cdot, z)$ is $G$-Lipschitz, and that $\eta_k \leq c/k$ for some $c < 1/G$. Then,
    $\acal_{\mathrm{SGD}}$ is random-set-stable with parameter
    \begin{align*}
      \textstyle \beta_n = \frac{4 L R }{n-1}  \left(\frac{L}{GR}\right)^{1/Gc+1}  \sum_{1\leq k\leq T}  k^{\frac{Gc}{Gc+1}}.
    \end{align*}
\end{restatable}
\subsection{Worst-case generalization bound for data-dependent stable random sets}
We now leverage \Cref{ass:random-trajectory-stability} to prove worst-case generalization bounds over random sets.
The next lemma shows that the expected worst-case generalization error can be bounded in terms of the random set stability parameter and a Rademacher complexity term \citep{bartlett2002rademacher}. 

\begin{restatable}{lemma}{ThmRademacherBound}
\label{lemma:rademacher-bound}
Let $S \sim \datadist$. Let $J,K \in \mathbb{N}^\star$ be such that $n = JK$, and $\Tilde{S}_J \sim \mu_z^{\otimes J}$ be independent of $S$ and $U$. Assume that algorithm $\mathcal{A}$ satisfies \Cref{ass:random-trajectory-stability} with parameter $\beta_n >0$, then:
\begin{align}
    \label{eq:rademacher-bound}
   \textstyle \mathbb{E} \big[\sup_{w \in \mathcal{W}_{S,U}} \big(\mathcal{R}(w) - \widehat{\mathcal{R}}_S(w) \big) \big] \leq 2 \mathbb{E} \left[ \mathbf{Rad}_{\Tilde{S}_J}(\mathcal{W}_{S,U}) \right] + 2J\beta_n,
\end{align}
where $\mathbf{Rad}_{S}(\mathcal{W}) $ denotes the Rademacher complexity, whose definition is recalled in \Cref{def:rademacher-complexity}.
\end{restatable}

Most existing data-dependent worst-case generalization bounds rely on an information-theoretic approach \citep{simsekli2020hausdorff,hodgkinson2022generalization,andreeva2024topological}. In particular, this theorem can be seen as an analog of \cite[Theorem 6]{dupuis2024uniform}, which provided a bound in terms of Rademacher complexity and information-theoretic terms. Instead of information-theoretic terms, our framework makes use of the random set stability parameter $\beta_n$.

In \Cref{lemma:rademacher-bound}, the number $J$ is a free parameter. Before discussing the main applications of our methods, we show that a certain choice of $J$ recovers: (1) classical algorithmic stability bounds ($J=1$) and (2) worst-case bounds over fixed hypothesis sets ($J=n$). Therefore, the parameter $J$ can be seen as interpolating between these two settings, while yielding new data-dependent worst-case generalization bounds for intermediary values of $J$.

\begin{corollary}[Recovering stability bounds]
    \label{cor:recovering-stability}
    Consider an algorithm $\acal$ as in \Cref{example:singleton-algorithm} and assume that $\acal$ satisfies \Cref{ass:random-trajectory-stability} with parameter $\beta_n$. Then, $\acal$ is algorithmically stable in the sense of \Cref{eq:algorithmic-stability}.
    By setting $J=1$ in \Cref{lemma:rademacher-bound}, we obtain:
   $$
        \mathbb{E} \big[\risk (\acal(S,U)) - \er(\acal(S,U)) \big] \leq 2 \beta_n.
    $$
\end{corollary}
Up to the absolute constant $2$, this recovers classical algorithmic stability-based expected generalization bounds \citep{bousquet_stability_2002,bassily_stability_2020}.

\begin{corollary}[Recovering bounds on fixed hypothesis sets]
    \label{cor:recovering-fixed-W}
    Consider the setting of \Cref{example:data-independent-W}. Then, the algorithm satisfies \Cref{ass:random-trajectory-stability} with parameter $\beta_n = 0$. By using $J=n$ we obtain:
    \begin{align*}
         \textstyle \mathbb{E} \big[\sup_{w \in \mathcal{W}} \big( \mathcal{R}(w) - \widehat{\mathcal{R}}_S(w) \big) \big] \leq 2 \mathbb{E} \left[ \mathbf{Rad}_{S}(\mathcal{W}) \right],
    \end{align*}
    where we observe that $\Tilde{S}_n \sim \datadist$ and $S \sim \datadist$ have the same distribution.
\end{corollary}

We observe that when $\beta_{n} = \landau{n^{-1}}$, \Cref{cor:recovering-stability} recovers the classical convergence rate of \citet{hardt2016train}.
\Cref{cor:recovering-fixed-W} shows that our framework tightly recovers the classical Rademacher complexity bounds (for fixed hypothesis set $\wcal$) \citep{bartlett2002rademacher}, as well as all known consequences of these bounds, such as VC dimension bounds \citep{vapnik1998statistical} and covering number bounds \citep{shalev2014understanding}. 
In particluar, \Cref{cor:recovering-fixed-W} shows that in the data-independent case we obtain the usual worst-case convergence rate of $\landau{n^{-1/2}}$.

\vspace{-2mm}
\section{Applications}
\label{sec:application}
\vspace{-2mm}
In this section, we apply our random set stability framework to improve recently proposed data-dependent worst-case generalization bounds in terms of fractal dimensions and topological complexities \citep{andreeva2024topological}. We assume that a learning algorithm $\acal(S,U) =: \wcal_{S,U}$ is given, in the sense of \Cref{eq:algorithm-formulation}. One may think of $\wcal_{S,U}$ as a learning trajectory as in \Cref{example:trajectory}.

\paragraph{Assumptions}
Before presenting our main applications, we introduce some structural assumptions. 
We first note that the Rademacher complexity (RC) term appearing in \Cref{lemma:rademacher-bound} is computed using a sample $\Tilde{S}_J \sim \mu_z^{\otimes J}$ independent of the random set $\wcal_{S,U}$. Despite this, the set over which the RC term is computed is the data-dependent random set $\wcal_{S,U}$, while the independent sample $\Tilde{S}_J$ is only used to evaluate the loss inside the RC. In order to exploit this property, we assume that the loss has a ``local" Lipschitz regularity, in the following sense.

\begin{assumption}[Lipschitz on random sets]
    \label{ass:local-lipschitz}
    There exists a measurable map $(S,U) \mapsto L_{S,U}$ such that for all $S$ and all $U$, for all $w, w'\in\wcal_{S,U}$, for all $z\in\mathcal{Z}$, $|\ell(w,z) - \ell(w',z)| \leq L_{S,U}\|w-w'\|$. 
\end{assumption}

Lipschitz continuity assumptions are commonly used in the context data-dependent worst-case bounds \citep{simsekli2020hausdorff,birdal2021intrinsic,andreeva2023metric}. However, our framework crucially only requires the Lipschitz continuity of $\ell(\cdot ,z)$ to hold on each set $\wcal_{S,U}$ individually, with a Lipschitz constant $L_{S,U}$ that depends both on the dataset $S\in\zcal^n$ and the noise $U$.
Note however that this local Lipschitz continuity of $\ell(\cdot, z)$ is still required to be uniform in $z \in \zcal$.
For instance, in the case of \Cref{example:trajectory}, where $\wcal_{S,U}$ is finite, then \Cref{ass:local-lipschitz} is satisfied as soon as $\zcal$ is finite (or compact if $z\mapsto\nabla\ell(w,z)$ is continuous for a topology on $\zcal$).

As is typical in this literature \citep{andreeva2024topological,dupuis2024uniform,birdal2021intrinsic,bartlett2002rademacher}, we also assume the boundedness of the loss function. 

\begin{assumption}
    \label{ass:boundedness}
    We assume that the loss $\ell$ is bounded in $[0, B]$, with $B > 0$ being a constant.
\end{assumption}

\subsection{Intrinsic dimensions and topological bounds}
\label{sec:intrinsic-topological}

Recent studies \citep{simsekli2020hausdorff, dupuis2023generalization, andreeva2023metric, dupuis2024uniform, birdal2021intrinsic, andreeva2024topological} have established upper-bounds on \Cref{eq:worst-case-gen} and drawn links between the \emph{topological structure} of $\wcal_{S,U}$ and the generalization error. These bounds are of the form presented in \Cref{eq:informal-topological-bounds}. As mentioned in \Cref{sec:introduction}, one of the main drawbacks of these bounds is that they contain intractable and poorly understood mutual information terms.
One of our main contributions is to show that, under random set stability, we are able to obtain data-dependent worst-case generalization bounds featuring many of the most popular complexity measures, without the need of the intractable mutual information terms (denoted IT in \Cref{eq:informal-topological-bounds}). 

We first present an adaptation to our setting of the intrinsic dimension-based bounds of \citep{simsekli2020hausdorff}, without any mutual information term.

\begin{restatable}{theorem}{thmFractalDim}
    \label{thm:fractal-dim-bound}
    Suppose that Assumptions \ref{ass:boundedness}, \ref{ass:random-trajectory-stability}, and \ref{ass:local-lipschitz} hold, and that $\wcal_{S,U}$ is a.s. of finite diameter. Without loss of generality, assume that $ \beta_n^{-2/3}$ is an integer divisor of $n$. There exists $\delta_n > 0$ such that for all $\delta < \delta_n$
    \begin{align*}
    \mathbb{E} \bigg[\sup_{w \in \mathcal{W}_{S,U}} \left(\mathcal{R}(w) - \widehat{\mathcal{R}}_S(w) \right) \bigg] \leq  2 \mathbb{E} \bigg[ \frac{B}{n} + \delta L_{S,U} + \beta_n^{1/3} \left( 1 + B \sqrt{4\upperbox(\wcal_{S,U})\log \frac1{\delta}} \right) \bigg],
\end{align*}
where $\upperbox(\wcal_{S,U})$ is the upper box-counting dimension \citep{falconer2013fractal}, which we define in the appendix in \Cref{def:box-counting-dim}. We refer to the \Cref{sec:proof-fractal-bound} for a discussion of the parameter $\delta_n$.
\end{restatable}

In many cases, the upper box-counting dimension of $\wcal_{S,U}$ is much smaller than the ambient dimension $d$ \citep{falconer2013fractal}.
Under mild assumptions, the upper box-counting dimension can be replaced by other notions like the persistent homology dimension \citep{birdal2021intrinsic} and the magnitude dimension \citep{andreeva2023metric}, which have provided promising empirical results.

However, it was argued by \citet{andreeva2024topological} that such intrinsic dimensions are not well-suited to the practically used stochastic optimizers due to their discrete nature, as in \Cref{example:trajectory}. 
To overcome this issue, these authors proposed to use two new notions of topological complexity, called the $\alpha$-weighted lifetime sums (denoted $\mathbf{E}^\alpha (\wcal_{S,U}) $) and the positive magnitude (denoted $\mathbf{PMag}(s \cdot \wcal_{S,U})$), and significantly improved the empirical results. Intuitively, the $\alpha$-weighted lifetime can be understood as tracking the evolution of `connected components' of a set across different scales and summing their contributions. On the other hand, the magnitude may be thought of as capturing the effective number of distinct points in a finite space at different scale \cite{leinster2013magnitude}.
We defer a precise definition of these notions to  \Cref{def:alpha_weighted_lifetimesum} and \ref{def:positive-magnitude}.

In the next theorem, we provide IT free versions of the topological generalization bounds of \citet{andreeva2024topological} under our assumption of random set stability (\Cref{ass:random-trajectory-stability}).

\begin{restatable}{theorem}{thmTopologicalBounds}
    \label{thm:both-topological-bounds}
    Suppose that Assumptions \ref{ass:boundedness}, \ref{ass:random-trajectory-stability}, and \ref{ass:local-lipschitz} hold. We further assume that the set $\wcal_{S,U}$ is almost surely finite. Without loss of generality, assume that $ \beta_n^{-2/3}$ is an integer divisor of $n$. Then, for any $\alpha \in (0,1]$, we have
    \begin{align*}
        \mathbb{E} \bigg[\sup_{w \in \mathcal{W}_{S,U}} (\risk(w) - \er(w)) \bigg] \leq  \beta_n^{1/3} \left( 2 + 2B + 2B \mathbb{E} \left[ \sqrt{2 \log (1 + K_{n,\alpha} \mathbf{E}^\alpha (\wcal_{S,U}))} \right]\right),
    \end{align*}
    where $K_{n,\alpha} := 2 (2L_{S,U} \sqrt{n} / B)^\alpha$. Moreover, for any $\lambda > 0$, let $s(\lambda) := \lambda L_{S,U} \beta_n^{-1/3} / B$. We have
    \begin{align*}
        \mathbb{E} \bigg[\sup_{w \in \mathcal{W}_{S,U}} (\risk(w) - \er(w)) \bigg] \leq \beta_n^{1/3} \left( 2 +  \lambda B + \frac{2B}{\lambda} \mathbb{E} \left[ \log \mathbf{PMag}\left(s(\lambda) \cdot \wcal_{S,U} \right)  \right]\right).
    \end{align*}
\end{restatable}

This result suggests that a theoretically justified choice of the magnitude scale would be $s(\lambda) \approx \beta_n^{-1/3}$, which amounts to $\Theta(n^{1/3})$ in the event that $\beta_n = \mathcal{O}(1/n)$. As a reminder, in the classical setting \Cref{cor:recovering-fixed-W} we recover the standard convergence rate of $\mathcal{O}(n^{-1/2})$.  

Unlike trajectory-dependent topological bounds \citep{simsekli2020hausdorff,birdal2021intrinsic,dupuis2023generalization,andreeva2024topological}, our bounds do not involve IT terms, which can be unbounded. While our bounds result in a slower convergence rate, this represents a deliberate trade-off to maintain boundedness.
In addition to replacing IT terms with a stability assumption, the stability parameter $\beta_n$ acts as a multiplying constant in the bound, hence increasing the impact of the topological complexity on the bound. 
Compared to the IT terms, $\beta_n$ is easily interpretable and can be expected to decrease with $n$ at a polynomial rate, as discussed above.

\vspace{-2mm}
\section{Empirical Evaluations}
\vspace{-2mm}
\label{sec:empirical_evaluations}
In \Cref{sec:application}, we provided the first fully computable topological/worst-case generalization bounds, without the need for intractable mutual information terms, which limited the empirical analysis of previous works.

Instead, our bounds are mainly based on the stability parameter $\beta_n$, which allows us to assess how they empirically compare with the worst-case generalization error and whether their structure aligns with observations, thus validating the framework and its practical applicability.

All experiments are performed using a Vision Transformer (ViT)~\citep{dosovitskiy2021an} on the CIFAR-100 dataset and GraphSAGE~\citep{hamilton2017inductive} on the MNISTSuperpixels dataset \citep{monti2017geometric}. All models are implemented in PyTorch and trained with the ADAM optimizer \citep{kingma2014adam}. Further implementation details for both models are provided in \Cref{sec:empirical-details-appendix}.

\paragraph{Experimental Design}
We follow the protocol of previous works \cite{dupuis2023generalization,andreeva2024topological} and train models until convergence, using this checkpoint as a starting point for further analysis. Our empirical analysis is structured in two main parts.

First, we analyze the order of magnitude of our bounds, offering initial insights into the tightness of our framework. We fine-tune each model for 500 iterations on a previously unseen dataset, with fixed batch sizes ($b$) and learning rates $(\eta)$, in particular $b \in \{64, 128\}$ and $\eta \in \{10^{-6}, 10^{-5}\}$. For each $(b, \eta)$ configuration, we estimate the stability coefficient \Cref{ass:random-trajectory-stability} over 5 different seeds. 

Then we examine the relationship between stability and topological complexity (i.e., weighted life-time sums and positive magnitude in \Cref{thm:both-topological-bounds}). This analysis is motivated by the dependence of the stability parameter on the sample size $n$ and the multiplicative interaction between stability and topological complexity in \Cref{thm:both-topological-bounds}. To this end, we fine-tune the models for further $5 \cdot 10^3$ iterations. The experiments are structured on a $4 \times 4 \times 5$ grid based on the learning rate $\eta$, the batch size $b$, and $n$. For detailed information about the hyperparameter, we refer to \Cref{sec:empirical-details-appendix}.

\paragraph{Quantity Estimation} In this paragraph, we describe the quantities used in our empirical analysis and their estimation. Implementation details are provided in the Appendix.
\begin{itemize}[leftmargin=*,topsep=0pt,noitemsep]
    \item \textbf{Generalization error:} We approximate $G_S(\wcal_{S,U})$ (\Cref{eq:worst-case-gen}) by tracking train and test risk at every iteration and reporting $\max_t \{\texttt{test risk} - \texttt{train risk}\}$. This refines prior proxies based on either the final gap \citep{birdal2021intrinsic} or the worst test risk \citep{andreeva2024topological}.
    
    \item \textbf{Topological complexities:} We compute $\mathbf{PMag}(\wcal_{S,U})$ via a Krylov subspace PCG approximation \cite{salim2021q}, and $\mathbf{E}^\alpha(\wcal_{S,U})$ using \texttt{giotto-ph} \cite{perez2021giotto} with $\alpha=1$.
    
    \item \textbf{Distance matrix:} For trajectories $\wcal_{S,U}$, we form a Euclidean distance matrix $\rho(w,w')=\|w-w'\|$. To reduce computational cost, we uniformly sample $1500$ of the $5000$ iterations.

    \item \textbf{Stability parameter:} For random set stability (\Cref{ass:random-trajectory-stability}), we replace $50$ unseen samples per training set, retrain to obtain $\wcal_{S,U}$ and $\wcal_{S',U}$, and measure worst-case loss deviations across iterations $\max_{w \in \mathcal{W}_{S,U}} \; \min_{w' \in \mathcal{W}_{S',U}} \; \sup_{Z \in \mathcal{Z}} \; \big| \ell(w, Z) - \ell(w', Z) \big|$. With $M = 500$ held-out points $\mathbf{Z}$ (see \Cref{algorithm:stability}), we average results over $5$ different random seeds. Note that this method necessarily leads to an optimistic estimation of the stability parameter $\beta_n$, as it would be intractable to evaluate the supremum over the entire data space $\zcal$.
\end{itemize}

\vspace{-1mm}
\subsection{Results and Discussion}
\vspace{-1mm}
We now present our main empirical results. First, we numerically evaluate our bounds. Next, we investigate the interplay between stability and the topological quantities appearing in \Cref{thm:both-topological-bounds}.

\paragraph{Order of the bounds} 

Our goal is to have an estimate of the order of magnitude of the worst-case error predicted by our bounds. 
To avoid the computationally costly evaluation of Lipschitz constants, we estimate a simple upper bound on the Rademacher complexity that is common to all our theoretical results.
Concretely, we use Massart's lemma \citep{shalev2014understanding} to bound the right-hand side of \Cref{eq:rademacher-bound} by $2\sqrt{2\log(T) / J} + 2J\beta_n$, where $T$ is the number of iterations. We estimate $\beta_n$ using the procedure above and optimize over $J$ (see \Cref{sec:bound_estimation}).  

We present the resulting estimation of our bound in \Cref{table:vit_graphsage_bounds}, for different learning rates and batch sizes. A consistent pattern can be observed: the estimated bounds are typically close to an order of magnitude larger than the actual worst-case generalization error. 
However, in most experimental settings, the estimated bounds remain below $100\%$ accuracy, hence, provide meaningful guarantees.
We also find that the stability parameter $\beta_n$ varies with the hyperparameters $(\eta,b)$ and, therefore, encodes the impact of the learning algorithm on generalization error. 

Moreover, the estimated $\beta_n$ captures the generalization behavior: specifically, $G_S(\wcal_{S,U})$ tends to decrease as $\beta_n$ becomes smaller, a trend that holds for both ViT and GraphSage.

\begin{wraptable}{t}{0.6\textwidth} %
\vskip -0.7cm
    \centering
    \small
    \setlength{\tabcolsep}{7pt} %
    \vspace{\baselineskip}
    \caption{Comparison of ViT and GraphSage across different learning rates ($\eta$) 
        and batch sizes ($b$). We report $\beta_n$, $G_S(W_{S,U})$, and the resulting bound. We use the 0-1 loss. }
    \label{table:vit_graphsage_bounds}
    \vspace{-2mm}
    \adjustbox{max width=\linewidth}{%
        \begin{tabular}{c|cc|ccc}
            \toprule
            \textbf{Model} & $\boldsymbol{\eta}$ & $\boldsymbol{b}$ & $\boldsymbol{10^4 \cdot \beta_n}$ & $\boldsymbol{10^2 \cdot G_S(W_{S,U})}$ & \textbf{$\boldsymbol{10^2 \cdot} $ Bound} \\
            \midrule
            \multirow{4}{*}{ViT}
               & $10^{-4}$  & 64 & 4.72 $\pm$ 0.52 & 10.24 $\pm$ 0.36 & 104.43 \\
               & $10^{-4}$ & 128 & 5.52 $\pm$ 0.67 & 12.84 $\pm$ 0.55 & 105.24 \\
               & $10^{-5}$ & 64 & 2.16 $\pm$ 0.36 & 7.16 $\pm$ 0.33 & 68.47 \\
               & $10^{-5}$ & 128 & 2.24 $\pm$ 0.22 & 6.36 $\pm$ 0.33 & 68.67 \\
            \midrule
          \multirow{4}{*}{GraphSage}
          & $10^{-4}$  & 64  & 2.56 $\pm$ 0.36 & 5.48 $\pm$ 0.36  & 75.63 \\
          & $10^{-4}$  & 128  & 2.64 $\pm$ 0.22 & 5.20 $\pm$ 0.40 & 75.79 \\
          & $10^{-5}$ & 64 &  0.64 $\pm$ 0.22 & 4.60 $\pm$ 0.14 & 47.79 \\
          & $10^{-5}$ & 128 & 0.80 $\pm$ 0.00 & 4.84 $\pm$ 0.17 & 48.59 \\
            \bottomrule
        \end{tabular}
    }  
        \vskip -0.2cm
\end{wraptable}
To the best of our knowledge, we are the first to \emph{fully} estimate a bound on the worst-case error, thereby providing a stronger guarantee than in prior work. In contrast, previous studies focused on bounds for a single weight vector and similarly reported discrepancies of one to two orders of magnitude between the estimated bound and the actual error \citep{chen2018stability, li2019generalization,dupuis2023generalization,harel2025temperature}. This suggests that our worst-case generalization bounds are reasonable tight. Moreover, the fact that smaller estimated generalization gaps are consistently associated with smaller stability parameters and, therefore, smaller bounds, shows that the bounds adapt meaningfully to model performance.  

\begin{figure}[t]
    \centering
    \vskip -0.2cm
    \begin{minipage}[t]{0.48\linewidth}
        \centering
         \includegraphics[width=\linewidth]{./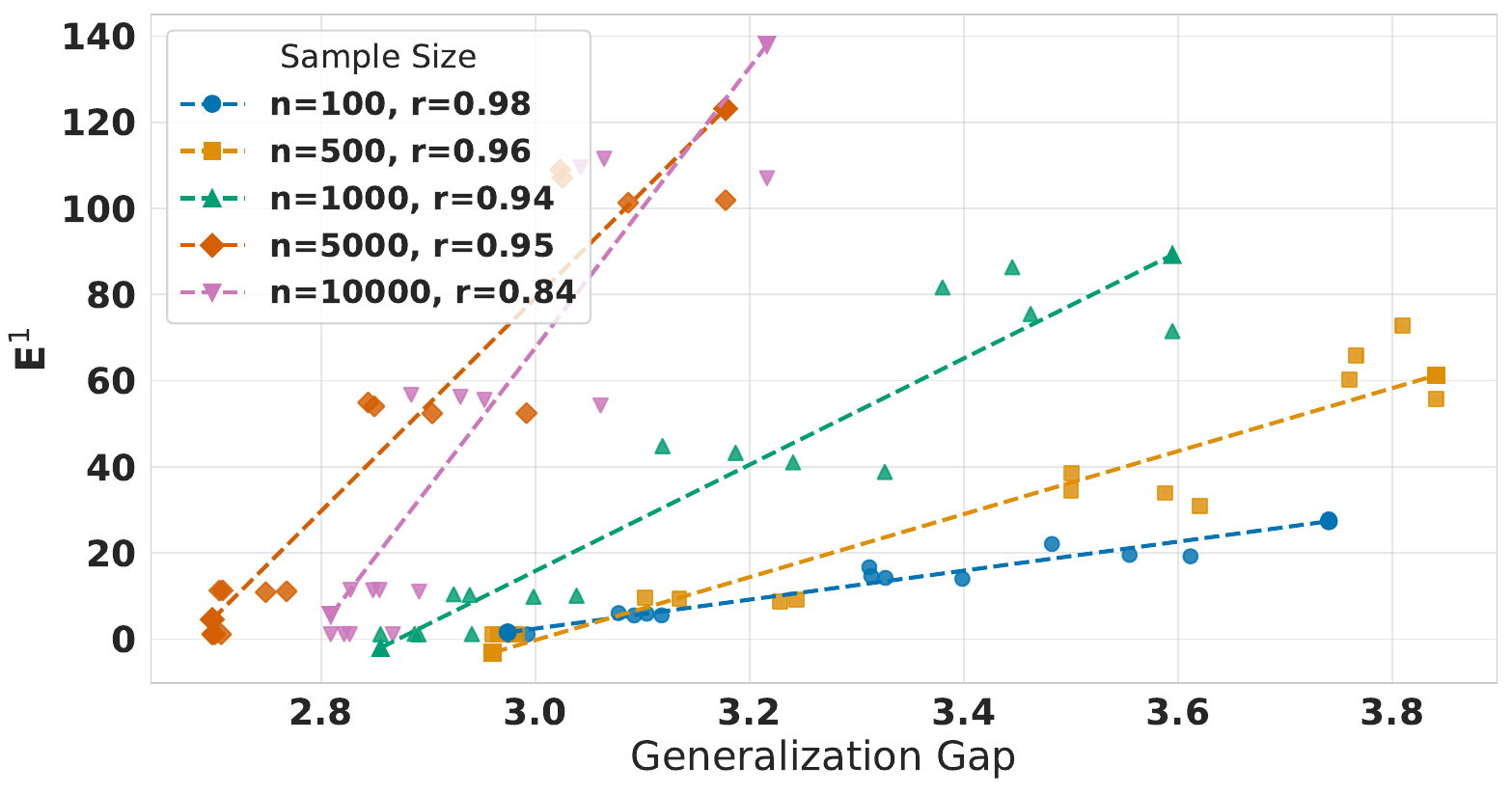}
        \caption{Variation of $\mathbf{E}^1$ with sample size $n$ for the model \textbf{ViT}. Pearson correlation coefficients $r$ are reported for each subgroup.} \label{fig:alpha_weighted_sum_correlation}
       
    \end{minipage}
    \hfill
    \begin{minipage}[t]{0.48\linewidth}
        \centering
         \includegraphics[width=\linewidth]{./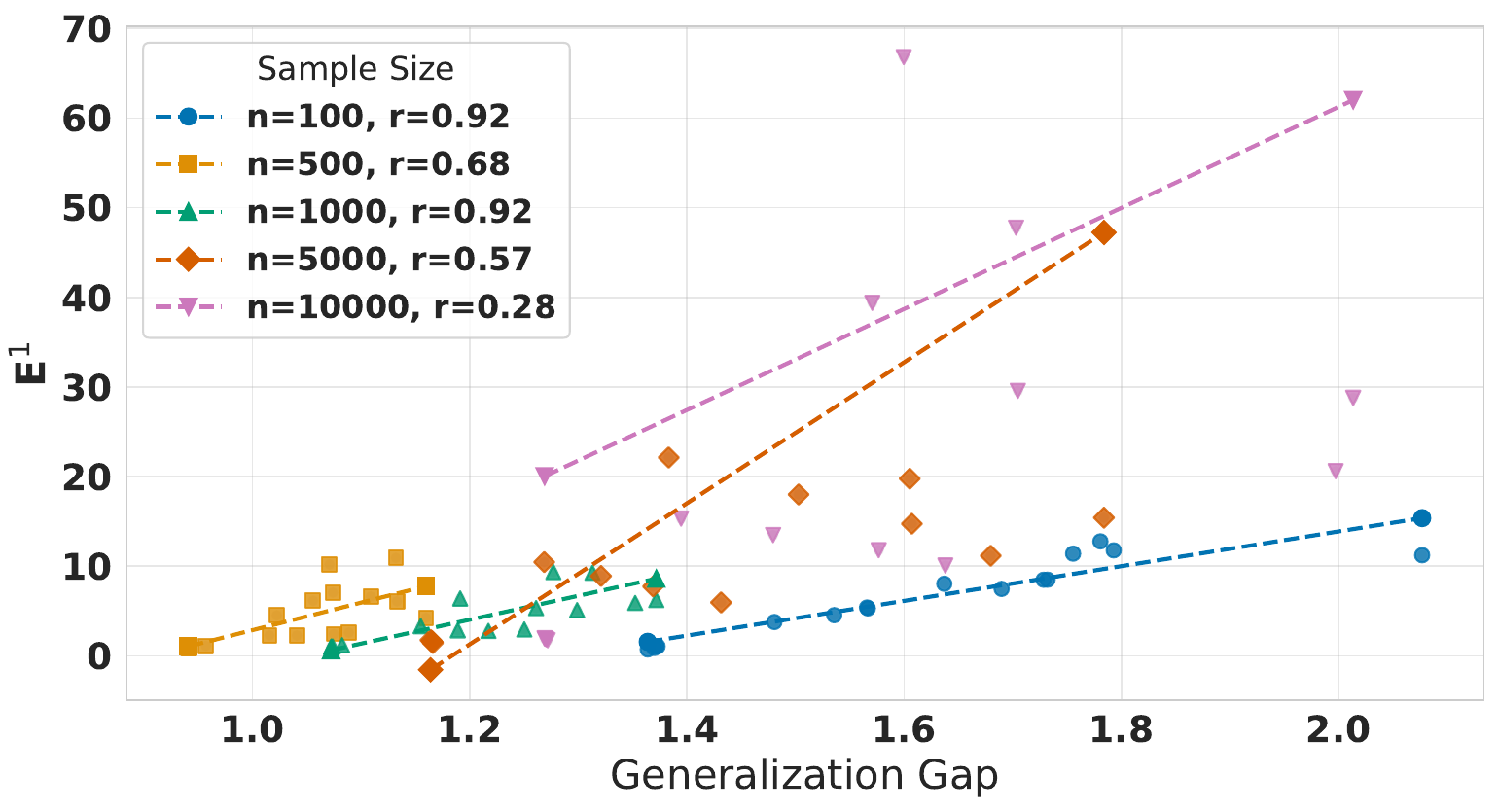}
        \caption{Variation of $\mathbf{E}^1$ with sample size $n$ for the model \textbf{GraphSage}. Pearson correlation coefficients $r$ are reported for each subgroup.}     \label{fig:alpha_weighted_sum_correlation_graph}
    \end{minipage}
    \vskip -0.6cm
\end{figure}

\paragraph{Interplay between Stability and $\mathbf{C}(\mathcal{W}_{S,U})$} In \Cref{fig:alpha_weighted_sum_correlation} and \ref{fig:alpha_weighted_sum_correlation_graph}, we observe that the sensitivity of $\mathbf{E}^1(\wcal_{S,U})$ with respect to the generalization gap (i.e., the slope of the affine regression of $\mathbf{E}^1(\wcal_{S,U})$ by the value of $G_S(\wcal_{S,U})$) increases when $n$ gets larger, which is what is predicted by our theory. Indeed, \Cref{thm:both-topological-bounds} assert that $\log \mathbf{E}^1(\wcal_{S,U})$ should be (approximately) of order at least $\beta_n^{-1/3} G_S(\wcal_{S,U})$, which amounts to $n^{1/3} G_S(\wcal_{S,U})$ in the event that $\beta_n = \Theta(1/n)$.
Therefore, our experimental results strongly support \Cref{thm:both-topological-bounds}.
Together, these observations unveil a strong coupling between stability and the topological complexity of the training trajectories.
Moreover, the topological complexities increase for large $n$ and therefore become more relevant to the bound.
The empirically observed coupling is supported by the theoretical results of \Cref{sec:theoretical-results},  through the product of the stability parameter $\beta_n$ and the topological complexities.
Finally, we observe that the reported Pearson correlations are lower for bigger $n$, especially for the GraphSage experiment. We believe that it could be due to the fact that reaching local minima is harder when $n$ increases, which can decrease the correlation between the topological complexities and the generalization error, as it was already reported in \citep{birdal2021intrinsic,andreeva2024topological}.

\vspace{-2mm}\section{Concluding Remarks}
\vspace{-2mm}
In this work, we present a new framework for obtaining worst-case generalization bounds over data-dependent random sets. This framework is based on a new concept of \emph{random set stability}, which we demonstrate holds for practically used algorithms, and is related to classical stability notions. We then bound the worst-case generalization error by the (weighted) sum of the random set stability parameter and a Rademacher complexity term. We present several applications of our new techniques by recovering classical generalization bounds and proving mutual information-free versions of existing fractal and topological bounds, which makes them for the first time fully computable. We supported our theory through a series of experiments conducted in practically relevant settings.

\paragraph{Limitations \& future work}
Despite providing the first mutual information-free topological and fractal bounds, our main results have two main disadvantages: \textit{(i)} we only provide expected bounds, and \textit{(ii)} our framework only allows for Euclidean-based topological complexities, and we typically do not cover the case of the data-dependent pseudometric introduced by \cite{dupuis2023generalization}.
Other methods to remove information-theoretic terms exist in the literature. For instance, \cite{neu_information_2021} added Gaussian noise to the algorithm output in their study of the generalization error of SGD. Extending such a technique to the random set setting is a promising direction for future works.
Finally, while our paper extends uniform stability to random sets, we believe that other notions of stability, like the one of \cite{lei_fine_grained_2020} could be extended to our random set setting and might lead to better generalization bounds under specific assumptions.

\paragraph{Ethics statement}
This work adheres to the Code of Ethics, ensuring responsible use of publicly available datasets and maintaining full transparency in experimental design and reporting. The study complies with ethical guidelines on the integrity of research and considers possible social impacts. We have not used data collected on human subjects. In order to minimize carbon emissions, we utilized high-end GPUs only when they are necessary to train our large models. Since contributions are primarily theoretical, the focus is on advancing our understanding of the theory of artificial intelligence rather than targeting specific applications that may have direct societal impacts.

\paragraph{Reproducibility Statement}
All data sets used in this work are referenced and publicly accessible. The experimental configurations and computational environment are outlined in detail within the main text and the appendix. The original code, accompanied by instructions, is included in the Supplementary Material. Lastly, all random seeds are provided to facilitate precise reproducibility. We plan to make our implementation available upon publication.

\paragraph{Acknowledgements}
The authors are grateful for the support of the Excellence Strategy of local and state governments, including computational resources of the LRZ AI service infrastructure provided by the Leibniz Supercomputing Center (LRZ), the German Federal Ministry of Education and Research (BMBF), and the Bavarian State Ministry of Science and the Arts (StMWK).
T. B. was supported by a UKRI Future Leaders Fellowship (MR/Y018818/1) as well as a Royal Society Research Grant (RG/R1/241402). 
U.\c{S}. is partially supported by the French government under the management of Agence Nationale de la Recherche as part of the ``Investissements d'avenir'' program, reference ANR-19-P3IA-0001 (PRAIRIE 3IA Institute). M.T, B.D. and U.\c{S}. are partially supported by the European Research Council Starting Grant DYNASTY – 101039676.

\bibliographystyle{apalike}
\bibliography{references}     %

\clearpage
\appendix

\clearpage

\section*{Appendices}
We now provide additional technical details and proofs that are omitted from the main write-up. Further, we provide additional technical and empirical results. The appendix is thus organized in the following way:

\begin{itemize}
    \item \textbf{Appendix \ref{sec:additional-technical-background}} provides supplementary technical background.
    \item \textbf{Appendix \ref{sec:omitted-proofs}} contains the omitted proofs of all our theoretical results.
    \item \textbf{\Cref{sec:empirical-details-appendix}} details the experimental setup and procedures required for reproducibility.
    \item \textbf{\Cref{sec:additional-empirical-results}} presents additional empirical results.
\end{itemize}

\section{Additional technical background}
\label{sec:additional-technical-background}
In this section, we provide the relevant definitions as well as the technical background for our results.

\subsection{Rademacher Complexity}
Recent advances on topological generalization bounds are based on a data-dependent Rademacher complexity, leveraging PAC-Bayesian theory on random sets \cite{dupuis2024uniform}. In our stability-based approach, the Rademacher complexity also plays a central role. We define the standard Rademacher complexity \cite{shalev2014understanding,bartlett2002rademacher} as:

\begin{definition}
    \label{def:rademacher-complexity}
    Let $\wcal \subset \Rd$, $S\in\zcal^n$, and $\boldsymbol{\epsilon} := (\epsilon_1,\dots,\epsilon_n) \sim \mathcal{U}(\{ -1,1 \})^{\otimes n}$ be i.i.d Rademacher random variables (i.e., centered Bernoulli variables). Let $\ell : \wcal \times \zcal \to \R_+$ be a loss function. We define the Rademacher complexity of $\wcal$ (relative to $\ell$) as:
    \begin{align*}
        \mathbf{Rad}_S (\wcal) := \Eof[\boldsymbol{\epsilon}]{\sup_{w\in\wcal} \frac1{n} \sum_{i=1}^n \epsilon_i \ell(w,z_i)}.
    \end{align*}
\end{definition}

Intuitively, the Rademacher complexity measures the ability of a model to fit random labels. Using this definition, we now establish our main result.

\subsection{Covering and Packing Numbers}
In this section, we  provide definitions of covering and packing numbers. These quantities have long been central in learning theory, particularly in the context of classical covering-based arguments for Rademacher complexity \cite{shalev2014understanding, bartlett2002rademacher}.

\begin{definition}[Covering number]
    \label{def:covering-number}
    Let $(\mathcal{W}, \rho)$ be a metric space with $\mathcal{W} \subset \mathbb{R}^d$ of finite diameter, and let $\delta > 0$. The \emph{covering number} $N_\delta(\mathcal{W})$ is the smallest integer $K \in \mathbb{N}^\star$ such that there exist points $x_1, \dots, x_K \in \mathcal{W}$ satisfying
    \begin{align*}
        \mathcal{W} \subset \bigcup_{k=1}^{K} B_\rho(x_k,\delta),
    \end{align*}
    where $B_\rho(x,\delta)$ denotes the open ball of radius $\delta$ centered at $x$ with respect to the metric $\rho$.
\end{definition}

In this definition, we take the centers $(x_1,\dots, x_K)$ of the covering to be in $\wcal$. This allows us to leverage the "local" Lipschitz continuity assumption defined in \Cref{ass:local-lipschitz}. This definition does not lose any generality compared to taking the points in $\Rd$. In particular, both definitions yield the same fractal dimensions \citep{falconer2013fractal}.

\begin{definition}[Packing number]
    \label{def:packing-number}
    Let $(\wcal, \rho)$ be a metric space and $\delta > 0$. The \emph{packing number} $P_\delta^\rho(\wcal)$ is the cardinality of a maximal set of points in $\wcal$ such that the closed balls of radius $\delta$ centered at these points are pairwise disjoint.
\end{definition}

\subsection{Intrinsic Dimensions}
 In this section, we focus on the upper box-counting dimension (also known as the upper Minkowski dimension), one of the most widely used notions of fractal dimension \cite{falconer2013fractal,mattila1999geometry}. This notion has also found applications in machine learning, for instance in the works of \citet{simsekli2020hausdorff,dupuis2023generalization}.

\begin{definition}[Upper box-counting dimension]
    \label{def:box-counting-dim}
    Let $(\wcal, \rho)$ be a metric space of finite diameter and let $(|N_\delta(\wcal)|)_{\delta>0}$ denote the covering numbers introduced in \Cref{def:covering-number}. The upper box-counting dimension is defined by:
    \begin{align*}
        \upperbox (\wcal) := \limsup_{\delta \to 0} \frac{\log |N_\delta(\wcal)|}{\log (1 / \delta)}.
    \end{align*}
\end{definition}
We refer the reader to \citep{falconer2013fractal} for more technical background on this notion. 

\begin{remark}
    It has been shown in \cite{kozma2006minimal,schweinhart2020fractal} that, for any compact metric space, several notions of dimension coincide with the well-known upper box-counting dimension (see \Cref{def:box-counting-dim}). In particular, the celebrated \emph{persistent homology dimension} \cite{adams2020fractal} has been connected to generalization~\cite{birdal2021intrinsic} and has also been studied empirically in various works~\cite{andreeva2024topological, tan2024limitations}.
\end{remark}

\subsection{Topological Complexities}

In this section, we define the topological quantities appearing in \Cref{thm:both-topological-bounds}.

\paragraph{$\alpha$-weighted lifetime sum ($\mathbf{E}^\alpha$)} The weighted lifetime sums is a central notion in the study of persistent homology, a subfield of topological data analysis (TDA). The interested reader may find a full exposition of these notions in \citep{boissonnat2018geometric,schweinhart2020fractal,kozma2006minimal}. There exist several equivalent definitions the weighted lifetime sums of degree $0$, see \citep{andreeva2024topological} for a concise review of these definitions in the context of machine learning. 

In this section, we present an intuitive approach to the weighted lifetime sum, which we also restrict to finite subsets of $\Rd$ and to homology of dimension $0$.
Note that this is a simplified, non-standard (though equivalent) definition of $\mathrm{PH}^0$.

\begin{definition}[Persistent homology of degree $0$ $\mathrm{PH}^0$]
Let $\mathcal{W} \subset \Rd$ be a finite set with cardinality $N$, endowed with the Euclidean metric. For each $t \geq 0$ (which is often referred to as a ``time'' variable in the TDA literature), we construct an undirected graph $G_t$ with edges given by
\begin{align*}
    \forall x,y \in \mathcal{W}, \quad \{x,y\} \in G_t \iff \rho(x,y) \le t.
\end{align*}
There exists a finite sequence of times $0 < t_1 < \dots < t_k < +\infty$ such that the number of connected components in $G_{t_i}$ changes compared to $G_t$ for $t < t_i$. Let $c_i$ denote the number of connected components in $G_{t_i}$. By convention, set $c_0 = N$ and $t_0 = 0$, and define $n_i := c_i - c_{i-1}.$
Then $\mathrm{PH}^0$ is defined as the multi-set,
\[
\mathrm{PH}^0 (\wcal) := \Bigl\{\!\!\bigl\{ \underbrace{t_1, \dots, t_1}_{n_1 \text{ times}},t_2 \dots, \underbrace{t_k, \dots, t_k}_{n_k \text{ times}} \bigr\}\!\!\Bigr\},
\]
where the notation $\{\!\{\cdot\}\!\}$ denotes a multi-set.
\end{definition}

\begin{remark}
    The construction of the graph above consists in looking at the $1$-dimensional simplices of the Rips complex associated with $\wcal$, see \citep[Chapter 2]{boissonnat2018geometric}.
\end{remark}

We now use $\mathrm{PH}^0$ to define the key quantities in our work, in particular the $\alpha$-weighted sum $\mathbf{E}_{\rho}^\alpha$.

\begin{definition}[$\alpha$-weighted lifetime sums]
\label{def:alpha_weighted_lifetimesum}
Let $\wcal \subset \Rd$ be a finite subsets of $\Rd$, endowed with the Euclidean metric. For $\alpha \ge 0$, the \emph{$\alpha$-weighted lifetime sum} of $\wcal$ is defined as
\begin{align*}
    \mathbf{E}^\alpha(\mathcal{W}) := \sum_{t \in \mathrm{PH}^0(\wcal)} t^\alpha,
\end{align*}
where $\mathrm{PH}^0$ denotes the set of lifetimes of the $0$-dimensional persistent homology intervals of $\wcal$, or, equivalently, the above definition.
\end{definition}

\paragraph{Magnitude and positive magnitude} Magnitude is another known topological invariant used in TDA, it was introduced by \citet{leinster2013magnitude}. In the context of machine learning, it has been used in particular by \citet{andreeva2023metric} to analyze the generalization error of neural networks.

A detailed account of this theory can be found in \citep{meckes2013positive,meckes2015magnitude}. In this paragraph, we restrict ourselves to a particular case, that is enough for our purpose.
In particular, we only define these notions for finite sets, but all our theoretical results can be seamlessly extended to compact subsets of $\Rd$ (see \citep[Remark B.15]{andreeva2024topological}).

In all this paragraph, we fix a finite set $\wcal \subset \Rd$. The key notion behind both magnitude and positive magnitude is that of a weighting of $\wcal$, which is defined below.

\begin{definition}[Weighting]
    \label{def:weighting}
    Let $\lambda > 0$.
    A $\lambda$-weighting of $\wcal$ is a vector $\beta_\lambda \in \R^\wcal$ such that:
    \begin{align*}
        \forall a \in \wcal, ~\sum_{b \in \wcal} e^{-\lambda \normof{a - b} } \beta_\lambda (b) = 1.
    \end{align*}
    The existence of such a weighting has been shown by \citet{leinster2013magnitude}.
\end{definition}

We can now define the magnitude of $\wcal$.

\begin{definition}[Magnitude, \citep{leinster2013magnitude}]
    With the notations of \Cref{def:weighting}, the magnitude function of $\wcal$ is defined as:
    \begin{align*}
        (0, +\infty) \ni \lambda \longmapsto \mathbf{Mag}(\lambda \cdot \wcal) := \sum_{a \in \wcal} \beta_\lambda(a).
    \end{align*}
\end{definition}

Recently, \citet{andreeva2024topological} demonstrated that by slightly adapting the definition of magnitude, one obtains an object that is naturally related to the Rademacher complexity (\Cref{def:rademacher-complexity}), hence, relating it to learning theory. Therefore, they introduced the notion of \emph{positive magnitude}, which we recall below.

\begin{definition}[Positive magnitude, \citep{andreeva2024topological}]
    \label{def:positive-magnitude}
     With the notations of \Cref{def:weighting}, the \emph{positive} magnitude function of $\wcal$ is defined as:
    \begin{align*}
        (0, +\infty) \ni \lambda \longmapsto \mathbf{PMag}(\lambda \cdot \wcal) := \sum_{a \in \wcal} \left(\beta_\lambda\right)_+(a),
    \end{align*}
    where $x_+ := \max (x, 0)$.
\end{definition}

\section{Omitted Proofs}
\label{sec:omitted-proofs}
In this section, we present the omitted proofs in full detail. We also present a few additional technical results that are useful for our proofs and may be of independent interest.

\subsection{Proof of \Cref{corollary:stability_sgd}}
\label{sec:proof_stability_sgd}

We give below the proof of \Cref{corollary:stability_sgd}, which provides a random set stability assumption for projected SGD, in smooth and Lipschitz continuous settings.

\corProjectedSGD*

\begin{proof}
    To show the trajectory-stability of SGD, we will first show that it is argument-stable in the sense of Definition~\ref{def:parameter-stability}, then we will use Lemma~\ref{lemma:euclidean-stability-implies-our-stability} to establish random set stability. 

    To prove the argument stability, we almost exactly follow the steps in \cite[Theorem 3.12]{hardt2016train}. Let us consider a neighboring dataset $S' = (z'_1, \dots, z'_n ) \in \zcal^n$ that differs from $S$ by one element, i.e., $z_i = z'_i$ except one element.  Consider the SGD recursion on $S'$:
    \begin{align*}
        w'_{k+1} = \Pi_R \left[ w'_k - \eta_k \nabla \ell(w'_k, z'_{i_{k+1}})\right].
    \end{align*}
    Since most of $S$ and $S'$ coincide, at the first iterations it is highly likely that $w_k$ and $w'_k$ will be identical, since they will pick the same data points. Let us denote the random variable $\kappa \geq 1$ that is the first time $z_{i_{\kappa}} \neq z'_{i_{\kappa}}$. Consider $K \in \mathbb{N}$ with $1\leq K \leq T$. By using this construction, we have:
    \begin{align*}
        \mathbb{E}_U[\|w_K - w'_K\|] =& \mathbb{E}_U\left[\|w_K - w'_K\| \> \mathds{1}_{\kappa \leq t_0} \right] + \mathbb{E}_U\left[\|w_K - w'_K\| \> \mathds{1}_{\kappa > t_0} \right] .
    \end{align*}
    We now focus on the first term in the right-hand-side of the above equation:
    \begin{align}
        \nonumber \mathbb{E}_U\left[\|w_K - w'_K\|\right] \leq & \mathbb{E}_U\left[(\|w_K\| + \|w'_K\|) \> \mathds{1}_{\kappa \leq t_0} \right] \\
        \nonumber
        \leq& 2 R \> \mathbb{P}(\kappa \leq t_0) \\
        =& \frac{2R t_0}{n}.
    \end{align}
The second parameter is upper-bounded in the proof of \cite[Theorem 3.12]{hardt2016train} (adapted to our setting by noting that the projection on the ball is $1$-Lipschitz), which reads as follows:
\begin{align*}
   \mathbb{E}_U\left[\|w_K - w'_K\| \> \mathds{1}_{\kappa > t_0} \right] \leq  \frac{2 L}{G(n-1)}\left(\frac{K}{t_0}\right)^{G c}.
\end{align*}
By combining these two estimates, we obtain:
\begin{align*}
    \mathbb{E}_U[\|w_K - w'_K\|] \leq& \frac{2R t_0}{n} + \frac{2 L}{G(n-1)}\left(\frac{K}{t_0}\right)^{G c} .
\end{align*}
Choosing 
\begin{align*}
    t_0 = \left(\frac{L}{GR}\right)^{\frac{1}{Gc+1}} K^{\frac{Gc}{Gc+1}}
\end{align*}
yields
\begin{align*}
    \mathbb{E}_U[\|w_K - w'_K\|] &\leq \frac{4R }{n-1}  \left(\frac{L}{GR}\right)^{\frac{1}{Gc+1}} K^{\frac{Gc}{Gc+1}}\\
    &=: \delta_K.
\end{align*}
Since $\delta_K$ is independent of $S$ and $S'$, this shows that the $k$-th iterate of SGD is $\delta_k$ argument-stable. Finally, we conclude by Lemma~\ref{lemma:euclidean-stability-implies-our-stability}, that SGD is trajectory-stable with parameter
\begin{align*}
    \beta_n = L \sum_{k=1}^K \delta_k.
\end{align*}
This concludes the proof. 
\end{proof}

\subsection{A key technical lemma: Rademacher bound for trajectory-stable algorithms}
\label{sec:rademacher bound}

A key technical element of our approach is the following theorem, which presents an expected Rademacher complexity bound for the worst-case generalization error of trajectory-stable algorithms. Before proceeding to the proof, we recall the following notation for the worst-case generalization error.
\begin{align*}
    G_S(\wcal) := \sup_{w \in \wcal} \left( \risk(w) - \er(w) \right).
\end{align*}

\ThmRademacherBound*

The proof is inspired by the data splitting technique used in \cite[Theorem 3.8]{dupuis2023generalization}, which we adapt to our purpose of using the trajectory stability assumption to remove the need for information-theoretic terms.

\begin{proof}
    In the following, we consider tree independent datasets $S,S',S''\sim \datadist$, with $S = (z_1, \dots, z_n)$ (and similar notation for $S'$ and $S''$). Up to a slight change in the constant, we can without loss of generality consider $J,K \in \mathbb{N}^\star$ such that $n = JK$. We write the index set as a disjoint union as follows:
    \begin{align*}
        \{ 1, \dots, n \} = \coprod_{k=1}^K N_k,
    \end{align*}
    with for all $k$, $|N_K| = J$, where $\coprod$ denotes the disjoint union. We also introduce the notation $S_k \in \zcal^n$, with:
    \begin{align*}
        \forall i \in \{ 1, \dots, n \} ,~ (S_k)_i = \left\{
                \begin{array}{ll}
                  z_i', \quad i \in N_k\\
                  z_i, \quad i \notin N_k
                \end{array}
              \right.
    \end{align*}
    Given two datasets $S_1, S_2 \in \zcal^n$, we also consider a measurable mapping $w(\mathcal{W}_{S_1},S_2)$ (whose existence is satisfied in our setting \cite{molchanov_theory_2017}) such that, almost surely:
    \begin{align*}
    \omega(\mathcal{W}_{S_1},S_2) \in \argmax_{w \in \mathcal{W}_{S_1}} \left( \risk(w) - \widehat{\mathcal{R}}_{S_2}(w) \right).
    \end{align*}
    Recall that $\wcal_S \in \mathrm{CL}(\mathbb{R}^d)$ (where $\mathrm{CL}(\mathbb{R}^d)$ denotes the closed subsets of $\Rd$) is a data-dependent random trajectory, which we assume is a random compact set \cite{molchanov_theory_2017}. We also denote $\rho_S$ the conditional distribution of $\wcal_S$ given $S$.
    Finally we recall the notation:
    \begin{align*}
        G_S(\wcal_{S,U}) := \sup_{w \in \wcal_{S,U}} \left( \risk(w) - \er(w) \right).
    \end{align*}
    We have:
    \begin{align*}
        \Eof{G_S(\wcal_{S,U})}  &= \Eof{\risk(\omega(\wcal_{S,U}, S)) - \er(\omega(\wcal_{S,U}, S)))} \\
        &= \frac1{n} \Eof{\sum_{k=1}^K \sum_{i\in N_k} \left(\ell(\omega(\mathcal{W}_{S,U},S),z_i'') - \ell(\omega(\mathcal{W}_{S,U},S),z_i) \right)}.
    \end{align*}
    By Assumption \ref{ass:random-trajectory-stability} there exists a measurable function $\omega': \mathrm{CL}(\mathbb{R}^d) \times \mathcal{Z}^n \to \mathbb{R}^d$ such that for any $z \in \zcal$
        \begin{align*}
          \mathbb{E}_U [| \ell(\omega(\mathcal{W}_{S,U}, S), z) - \ell(\omega'(\mathcal{W}_{S_k,U},\omega(\mathcal{W}_{S,U}),S), z ) |] \leq J\beta_n.
        \end{align*}
    We now apply the stability assumption on each term of the sum to get:
    \begin{align*}
        \Eof{G_S(\wcal_{S,U})}  &\leq 2 J \beta_n + \frac1{n} \Eof{\sum_{k=1}^K \sum_{i\in N_k} \left(\ell(\omega'(\mathcal{W}_{S_k,U},\omega(\mathcal{W}_{S,U},S)),z_i'') - \ell(\omega'(\mathcal{W}_{S_k,U},\omega(\mathcal{W}_{S,U},S)),z_i) \right)} \\
        &\leq 2J \beta_n + \frac1{K} \Eof{\sum_{k=1}^K  \sup_{w \in \wcal_{S_k,U}} \frac1{J} \sum_{i\in N_k} \left(\ell(w,z_i'') - \ell(w,z_i) \right)} \\
        &\leq 2J \beta_n + \frac1{K} \Eof{\sum_{k=1}^K  \sup_{w \in \wcal_{S,U}} \frac1{J} \sum_{i\in N_k} \left(\ell(w,z_i'') - \ell(w,z_i') \right)},
    \end{align*}
    where the last equation follows by linearity of the expectation independence by interchanging $z_i$ and $z_i'$ for all $i\in N_k$.  By noting that all the terms inside the first sum have the same distribution, we have:
    \begin{align*}
        \Eof{G_S(\wcal_{S,U})} &\leq 2J \beta_n + \frac1{K} \sum_{k=1}^K \Eof{ \sup_{w \in \wcal_{S,U}} \frac1{J} \sum_{i\in N_k} \left(\ell(w,z_i'') - \ell(w,z_i') \right)} \\
        &= 2J \beta_n +\Eof{ \sup_{w \in \wcal_{S,U}} \frac1{J} \sum_{i\in N_1} \left(\ell(w,z_i'') - \ell(w,z_i') \right)},
    \end{align*}
    As $\wcal_{S,U}$ is independent of $S'$ and $S''$, we can apply the classical symmetrization argument. More precisely, let $(\epsilon_1, \dots, \epsilon_n) \sim \mathcal{U}(\{ -1, 1 \})^{\otimes n}$ be i.i.d. Rademacher random variables (i.e., centered Bernoulli random variables), independent from the other random variable. We have:
    \begin{align*}
        \Eof{G_S(\wcal_{S,U})} &\leq  2J \beta_n + \Eof{ \sup_{w \in \wcal_{S,U}} \frac1{J} \sum_{i\in N_1} \epsilon_i \left(\ell(w,z_i'') - \ell(w,z_i') \right)} \\
        &\leq  2J \beta_n + \Eof{ \sup_{w \in \wcal_{S,U}} \frac1{J} \sum_{i\in N_1} \epsilon_i\ell(w,z_i'')}  + \Eof{  \sup_{w \in \wcal_{S,U}} \frac1{J} \sum_{i\in N_1}  (-\epsilon_i) \ell(w,z_i') } \\
        &= 2J \beta_n + 2\Eof{ \sup_{w \in \wcal_{S,U}} \frac1{J} \sum_{i\in N_1} \epsilon_i\ell(w,z_i')} .
    \end{align*}
    Therefore, by denoting $\Tilde{S}_J = (z_i')_{i \in N_1}$, we obtain by definition of the Rademacher complexity that:
    \begin{align*}
        \Eof{G_S(\wcal_{S,U})} &\leq  2J \beta_n + 2\Eof{ \mathbf{Rad}_{\Tilde{S}_J} (\wcal_{S,U}) } ,
    \end{align*}
    which is the desired result.
\end{proof}

This result is based on leveraging our stability assumption to enable the use of the classical symmetrization argument \cite{shalev2014understanding}. 
Compared to previously proposed modified Rademacher complexity bounds, \cite{foster2019hypothesis,sachs2023generalization}, the above theorem has the advantage of directly involving the Rademacher complexity of the data-dependent hypothesis set $\wcal_S$, which tightens the bound and makes it significantly more empirically relevant.

Compared to previous studies \cite{foster2019hypothesis,dupuis2024uniform}, a drawback of our bound compared to these studies is that it only holds in expectation.
Moreover, while we directly involve the data-dependent trajectory $\wcal_S$, our Rademacher $\mathbf{Rad}_{\Tilde{S}}(\mathcal{W}_S)$ term uses an independent copy of the dataset, hence, losing some data dependence compared to the term $\mathbf{Rad}_{S}(\mathcal{W}_S)$ obtained by \cite{dupuis2024uniform}. That being said, we show that our techniques still recover certain topological complexities, at the cost of requiring an additional Lipschitz continuity assumption.
The Rademacher complexity term appearing in \Cref{lemma:rademacher-bound} uses the data-dependent trajectory $\wcal_S$ and an independent dataset $\Tilde{S}$ of size $J$. Following classical arguments \cite{shalev2014understanding,andreeva2024topological}, we expect this term to behave as $\mathcal{O}(1/\sqrt{J})$, hence, optimizing the value of $J$ yields a bound of order $\mathcal{O}(\beta_n^{1/3})$. 

\subsection{Covering bounds}
\label{sec:proof-covering-bound}

In this section, we present worst-case generalization bounds over data-dependent random sets based on covering numbers, as they have been defined in \Cref{def:covering-number}. These covering bounds serve as a basis to derive some of our other main results, namely, our generalization bounds based on fractal dimension and $\alpha$-weighted lifetime sums. We believe that these additional contributions are also of independent interest.

The following lemma is a consequence of classical covering argument for Rademacher complexity \citep{shalev2014understanding}.

\begin{lemma}[Covering lemma]
    \label{lemma:covering}
    Let $\wcal \subset \Rd$ be a set with finite diameter. We assume that \Cref{ass:boundedness} holds and that the loss $\ell(\cdot, z)$ is $L$-Lipschitz continuous for all $z \in \zcal$. Let $m\in\mathds{N}^\star$ and $S\in\zcal^m$, we have
    \begin{align*}
        \mathbf{Rad}_{S} (\wcal) \leq \inf_{\delta > 0} \left( L\delta + B\sqrt{\frac{2 \log |N_\delta(\wcal)|}{m}} \right)
    \end{align*}
\end{lemma}

The argument is very classical, we reproduce it briefly for the sake of completeness.

\begin{proof}
    Let $\delta > 0$ and $N := |N_\delta(\wcal)|$ and let $(x_1, \dots, x_N)$ be a minimal $\delta$-cover of $\wcal$.
    By Lipschitz continuity, we have that:
    \begin{align*}
        \rad (\wcal) \leq L\delta + \rad(\{ x_1, \dots, x_N \}).
    \end{align*}
    By Massart's lemma \citep[Lemma 26.8]{shalev2014understanding}, we have that
    \begin{align*}
         \rad (\wcal) \leq L\delta + B\sqrt{\frac{2 \log N}{m}} = L\delta + B\sqrt{\frac{2 \log |N_\delta(\wcal)|}{m}} .
    \end{align*}
    We conclude by taking the infimum over $\delta>0$.
\end{proof}

To illustrate the power of our framework, we present in the next theorem a data-dependent worst-case bounds for stable random sets in terms of covering numbers of $\wcal_{S,U}$.

\begin{restatable}{theorem}{thmCoveringBound}
    \label{thm:covering-bound}
    Suppose that Assumptions \ref{ass:boundedness}, \ref{ass:random-trajectory-stability}, and \ref{ass:local-lipschitz} hold, and that $\wcal_{S,U}$ is almost surely of finite diameter. Without loss of generality, assume that $ \beta_n^{-2/3}$ is an integer divisor of $n$\footnote{Not making this assumption would only lead to minor changes in the bound}. We have:
    \begin{align*}
    \mathbb{E} \left[\sup_{w \in \mathcal{W}_{S,U}} \left(\mathcal{R}(w) - \widehat{\mathcal{R}}_S(w) \right) \right] \leq 2 \Eof{\inf_{\delta>0} \left( \delta L_{S,U} + \beta_n^{1/3} \left( 1 + B \sqrt{2 \log |N_\delta (\wcal_{S,U})|} \right) \right)},
    \end{align*}
where $|N_\delta (\wcal_{S,U})|$ is covering number, i.e., the minimum number of balls of radius $\delta$ that are necessary to cover $\wcal_{S,U}$.
\end{restatable}

\begin{proof}
       By \Cref{lemma:rademacher-bound}, we have (recall that $G_S$ has been defined in \Cref{eq:worst-case-gen}):
    \begin{align*}
        \Eof{G_S(\wcal_{S,U})} \leq 2 \Eof{\mathbf{Rad}_{\Tilde{S}_J} (\wcal_{S,U})} + 2J\beta_n,
    \end{align*}
    with the notations of \Cref{lemma:rademacher-bound} for $\Tilde{S}_J$.
    By \Cref{ass:local-lipschitz}, for fixed $(S,U)$ the loss $\ell(\cdot, \zcal)$ is $L$-Lipschitz continuous for all $z \in \zcal$. Therefore, we can apply \Cref{lemma:covering} to obtain that
    \begin{align*}
         \Eof{G_S(\wcal_{S,U})} \leq 2 \Eof{ \inf_{\delta > 0} \left( \delta L_{S,U} + B\sqrt{\frac{2 \log |N_\delta(\wcal_{S,U})|}{J}}\right)} + 2J\beta_n,
    \end{align*}
    We make the choice $J := \beta_n^{-2/3}$, which we assumed for simplicity is an integer dividing $n$, so that it is a valid choice. This immediately gives the result.
\end{proof}

Covering bounds are widely used for worst-case bounds over data-independent hypothesis sets (i.e., \Cref{example:data-independent-W}) \citep{shalev2014understanding}. \Cref{thm:covering-bound} extends these results to data-dependent hypothesis sets. \Cref{thm:covering-bound} also improves over the data-dependent covering bounds of \citet[Corollary 33]{dupuis2024uniform} by avoiding the use of complicated information-theoretic terms and by allowing the Lipschitz constant to be defined only on the data-dependent set $\wcal_{S,U}$.

\subsection{Proof of \Cref{thm:fractal-dim-bound}}
\label{sec:proof-fractal-bound}

We now present the proof of \Cref{thm:fractal-dim-bound}. This theorem is based on our key technical lemma (\Cref{lemma:rademacher-bound}), the covering lemma (\Cref{lemma:covering}), and a limit argument that is inspired from previous works \citep{simsekli2020hausdorff,dupuis2023mutualinformationexpecteddynamics}.

\thmFractalDim*

\begin{proof}
    Let $(\Omega,\mathcal{F}_U, \mathbb{P}_U)$ denote the probability space to which belongs $U$, with $\mathbb{P}_U$ denoting the law of $U$.
    We also denote $\mathcal{F}$ the $\sigma$-algebra associated with the data space $\zcal$.
    As $\wcal_{S,U}$ is almost-surely of finite diameter, by definition of the upper box-counting dimension, we have $\datadist \otimes \mathbb{P}_U$-almost surely that:
    \begin{align*}
        \upperbox (\wcal_{S,U}) := \limsup_{\delta \to 0} \frac{\log |N_\delta(\wcal_{S,U})|}{\log(1 / \delta)} = \lim_{\delta \to 0} \sup_{0 < r < \delta} \frac{\log |N_r(\wcal_{S,U})|}{\log(1 / r)} =: \lim_{\delta \to 0} f_\delta (\wcal_{S,U}) .
    \end{align*}
    Let $(\delta_k)_{k\geq 0}$ be a decreasing positive sequence in $(0,1)$ with $\delta_k \to 0$. Let also $\gamma > 0$.
    By Egoroff's theorem \citep{bogachev2007measure}, there exists a set $\Omega_\gamma \in \mathcal{F}^{\otimes n} \otimes\mathcal{F}_U$ such that $\datadist \otimes \mathbb{P}_U(\Omega_\gamma) \geq 1 - \gamma$, and such that the convergence of $ \log f_{\delta_k} (\wcal_{S,U}) \to_k \log \upperbox (\wcal_{S,U}) $ is uniform on $\Omega_\gamma$. Therefore, there exists $k_{n, \gamma} > 0$ such that for all $k \geq k_{n,\gamma}$, we have for $(S,U) \in \Omega_\gamma$
    \begin{align*}
       \log  f_{\delta_k}(\wcal_{S,U}) \leq \log \upperbox(\wcal_{S,U}) + \log 2.
    \end{align*}
    Therefore, for all $0 < \delta < \delta_k$, we have by taking the exponential that for $(S,U) \in \Omega_\gamma$:
    \begin{align*}
        \log |N_\delta(\wcal_{S,U})| \leq 2 \log(1/\delta) \upperbox(\wcal_{S,U}).
    \end{align*}
    Now we use \Cref{lemma:rademacher-bound} to get that for all $J$ we have
    \begin{align*}
        \Eof{G_S(\wcal_{S,U})} &\leq 2  \beta_n J+ 2 \Eof[S,U,\Tilde{S}]{\mathbf{Rad}_{\Tilde{S}_J} (\wcal_{S,U})} \\
        &\leq  2 \beta_n J + 2B \Eof[S,U,\Tilde{S}]{\mathds{1}_{\overline{\Omega_\gamma}}(S,U)} + 2 \Eof[S,U,\Tilde{S}]{ \mathds{1}_{\Omega_\gamma}(S,U)  \mathbf{Rad}_{\Tilde{S}_J} (\wcal_{S,U})} \\
        &\leq 2 \beta_n J + 2B \gamma + 2 \Eof[S,U,\Tilde{S}]{ \mathds{1}_{\Omega_\gamma}(S,U)  \mathbf{Rad}_{\Tilde{S}_J} (\wcal_{S,U})} .
    \end{align*}
    We make the choice $J := \beta_n^{-2/3}$, which we assumed for simplicity is an integer dividing $n$, so that it is a valid choice.
    We conclude the proof by \Cref{lemma:covering} and setting $\gamma := 1/n$ and $\delta_n := k_{n,1/n}$. 
\end{proof}

\begin{remark}
    The careful reader might notice that the proof above requires some measure-theoretic conditions to hold, in particular, the measurability of the mappings $(S,U) \mapsto\rad (\wcal_{S,U})$ and $(S,U) \mapsto |N_\delta (\wcal_{S,U})|$. These aspects have been extensively studied in existing works on covering / fractal bounds on random sets \citep{dupuis2023generalization,dupuis2024uniform}. In our paper, we implicitly assume these conditions to be satisfied for the sake of simplicity, and refer to these works for more details. This choice is justified as our main application regards finite trajectories (\Cref{example:trajectory}), where all this conditions are satisfied.
\end{remark}

\begin{remark}
    The parameter $\delta_n$ appearing in the statement of \Cref{thm:fractal-dim-bound} might seem intriguing. It can be seen from the proof that this parameters quantifies the uniformity in $n$ of the limit defining the upper box-counting dimension (i.e., if this convergence is uniform in $n$, then $\delta$ is independent of $n$). Such a parameter already appears in \citep{dupuis2023generalization} in a similar context and it was shown in \citep{dupuis2024uniform} that $\delta$ can be made independent of $n$ under a convergence assumption of the random sets distributions. We refer to these works for further details. 
\end{remark}

\subsection{Proof of \Cref{thm:both-topological-bounds}}

Before presenting the proof of \Cref{thm:both-topological-bounds}, we present two technical lemma, whose proof can be found in \cite{andreeva2024topological}. The first lemma relates the Rademacher complexity to the $\alpha$-weighted lifetime sums.

\begin{lemma}
    \label{lemma:E-alpha-lemma}
    Let $m\in\mathbb{N}^\star$, $S\in\zcal^m$, and $\wcal \subset \Rd$ be a finite set. Assume that \Cref{ass:boundedness} and that $\ell(\cdot, z)$ is $L$-Lipschitz continuous on $\wcal$ for all $z \in \wcal$. For any $\alpha \in (0,1]$, we have:
    \begin{align*}
        \mathbf{Rad}_S (\wcal) \leq \frac{B}{\sqrt{m}} + B \sqrt{\frac{2 \log (1 + K_{n,\alpha} \mathbf{E}^\alpha (\wcal))}{m}},
    \end{align*}
    with $K_{n,\alpha} := 2(2L\sqrt{n}/B)^\alpha$.
\end{lemma}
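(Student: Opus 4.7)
The plan is to bound $\mathbf{Rad}_S(\wcal)$ by combining a covering-based argument with Massart's finite class lemma, where the covering count is controlled by $\mathbf{E}^\alpha_{\normof{\cdot}}(\wcal)$ through the MST structure. The two terms in the stated inequality suggest a single-scale (rather than Dudley-integral) argument: the additive $B/\sqrt{m}$ plays the role of the approximation error from passing to a finite cover, while $B\sqrt{2\log(1 + K_{n,\alpha}\mathbf{E}^\alpha)/m}$ is the Massart contribution with the log of the cover size.

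First, I would fix a scale $\epsilon = B/(2L\sqrt{n})$, calibrated so that $\epsilon^{-\alpha} = K_{n,\alpha}/2$. Removing from $\mathrm{MST}(\wcal)$ every edge of length $>\epsilon$ produces a forest whose number of connected components equals $1 + \#\{e \in \mathrm{MST}(\wcal) : |e| > \epsilon\}$. A Markov-type inequality gives
\begin{align*}
\#\{e \in \mathrm{MST}(\wcal) : |e| > \epsilon\} \;\leq\; \epsilon^{-\alpha} \sum_{e : |e| > \epsilon} |e|^\alpha \;\leq\; \epsilon^{-\alpha}\,\mathbf{E}^\alpha_{\normof{\cdot}}(\wcal),
\end{align*}
so, picking one representative from each component, I obtain a finite subset $C \subseteq \wcal$ with $|C| \leq 1 + (K_{n,\alpha}/2)\,\mathbf{E}^\alpha_{\normof{\cdot}}(\wcal) \leq 1 + K_{n,\alpha}\,\mathbf{E}^\alpha_{\normof{\cdot}}(\wcal)$.

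Next, by Assumption \ref{ass:boundedness} the vectors $(\ell(c,z_i))_{i=1}^m$ lie in $[0,B]^m$ for $c \in C$, so Massart's finite class lemma yields $\mathbf{Rad}_S(C) \leq B\sqrt{2\log|C|/m}$. Writing $\mathbf{Rad}_S(\wcal) \leq \mathbf{Rad}_S(C) + \sup_{w \in \wcal}\frac{1}{m}\sum_i |\ell(w,z_i) - \ell(c(w),z_i)|$, where $c(w)$ denotes the representative of the cluster containing $w$, the residual is then handled via Assumption \ref{ass:lipschitz}. The choice $\epsilon = B/(2L\sqrt{n})$ gives $L\epsilon = B/(2\sqrt{n}) \leq B/\sqrt{m}$ (using $m \leq n$ as is the case in the application to Theorem \ref{thm:rademacher-bound}), producing the additive $B/\sqrt{m}$ term. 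Substituting the cover-size bound in the Massart estimate yields the claimed inequality.

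The main obstacle is the approximation step: although MST edges within a single-linkage cluster are at most $\epsilon$, the Euclidean diameter of a cluster can be considerably larger, so the naive estimate $\normof{w - c(w)} \leq \epsilon$ fails and the residual control needs a sharper argument. I would address this by chaining the Rademacher increments edge by edge along the MST path from $w$ to $c(w)$ inside the cluster, using the variance estimate $\mathbb{E}_\epsilon|\sum_i\epsilon_i(\ell(e^+,z_i)-\ell(e^-,z_i))| \leq L|e|\sqrt{m}$ together with the fact that every such edge has length $\leq \epsilon$, and then collapsing the path-sum bound into the single-scale residual $O(B/\sqrt{m})$ by a careful accounting of which edges can appear on which path. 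An alternative route, which I would try in parallel, is to invoke a Dudley-type integral at dyadic scales $\epsilon_k = 2^{-k}\epsilon_0$ and verify that it collapses to the stated single-log form under the specific choice of $\epsilon$ driven by $K_{n,\alpha}$.
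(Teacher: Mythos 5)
Your setup (discretize at a single scale $\epsilon = B/(2L\sqrt{n})$, apply Massart's lemma to the representatives, absorb the approximation error into the additive $B/\sqrt{m}$) is the right skeleton --- the paper itself delegates this lemma to the proof of Theorem~3.4 of \cite{andreeva2024topological}, which follows exactly this single-scale covering-plus-Massart route. The gap is in how you produce the finite set $C$. Deleting the MST edges longer than $\epsilon$ and counting components gives the number of \emph{single-linkage clusters} at scale $\epsilon$, and, as you note yourself, a point of such a cluster can be arbitrarily far from its representative (a chain of $N$ points spaced $\epsilon/2$ apart is a single cluster of diameter $N\epsilon/2$). So $C$ is not an $\epsilon$-net, the residual $L\sup_{w}\normof{w-c(w)}$ is not $O(L\epsilon)$, and your Markov bound on the number of \emph{long} MST edges controls the wrong quantity. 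Neither proposed repair closes this: chaining the Rademacher increments edge by edge along the intra-cluster MST path and summing the per-edge variance bounds yields a residual of order $L\epsilon^{1-\alpha}\mathbf{E}^\alpha_{\normof{\cdot}}(\wcal)/\sqrt{m}$ (for $\alpha=1$ this is $L\,\mathbf{E}^1_{\normof{\cdot}}(\wcal)/\sqrt{m}$), i.e.\ \emph{linear} rather than logarithmic in the lifetime sum, and the ``careful accounting of which edges can appear on which path'' needed to do better is precisely the unproven step; the dyadic Dudley alternative is likewise only gestured at.

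The missing ingredient is the Kozma--Lotker--Stupp/Schweinhart covering estimate $N_\epsilon(\wcal) \leq 1 + 2\epsilon^{-\alpha}\mathbf{E}^\alpha_{\normof{\cdot}}(\wcal)$ for $\alpha\in(0,1]$, which bounds the genuine covering (in fact packing) number rather than the cluster count. Sketch: if $x_1,\dots,x_k\in\wcal$ are pairwise $\epsilon$-separated, take the minimal subtree of $\mathrm{MST}(\wcal)$ spanning them and traverse it by a depth-first tour; this decomposes into $k$ consecutive paths, each joining two $\epsilon$-separated points, with every MST edge used at most twice. On each such path either some edge exceeds $\epsilon$, or all edges are $\leq\epsilon$ and then $\sum_e|e|^\alpha\geq \epsilon^{\alpha-1}\sum_e|e|\geq\epsilon^\alpha$ because $\alpha\leq 1$; either way each path contributes at least $\epsilon^\alpha$, whence $k\epsilon^\alpha\leq 2\mathbf{E}^\alpha_{\normof{\cdot}}(\wcal)$. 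With this estimate in place, your discretization and Massart steps go through verbatim and produce the stated constants ($K_{n,\alpha}=2\epsilon^{-\alpha}$ with $\epsilon=B/(2L\sqrt{n})$, and $L\epsilon\leq B/\sqrt{m}$ using $m\leq n$). Note that this is the only place the restriction $\alpha\leq 1$ is used --- your Markov step never needs it, which is a further sign that it bounds the wrong object.
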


\begin{proof}
    This result is a particular case of a bound by \cite{andreeva2024topological} as part of the proof of their Theorem 3.4. The assumptions of \cite[Theorem 3.4]{andreeva2024topological} are satisfied by \Cref{ass:boundedness}, the Lipschitz continuity assumption, and by considering the Euclidean distance of $\Rd$ as a pseudometric on $\wcal$.
\end{proof}

The second lemma relates the Rademacher complexity to the positive magnitude introduced by \cite{andreeva2024topological}.

\begin{lemma}
    \label{lemma:magnitude-lemma}
    Let $m\in\mathbb{N}^\star$, $S\in\zcal^m$, and $\wcal \subset \Rd$ be a finite set. Assume that \Cref{ass:boundedness} holds and that $\ell(\cdot, z)$ is $L$-Lipschitz continuous on $\wcal$ for all $z \in \wcal$. Then, we have for any $\lambda > 0$ that:
    \begin{align*}
        \mathbf{Rad}_S (\wcal) \leq \frac{\lambda B^2}{2m} + \frac1{\lambda} \log \left( \mathbf{PMag}(L \lambda \wcal) \right).
    \end{align*}
\end{lemma}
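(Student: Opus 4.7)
The plan is to combine the exponential weighting technique with the defining equation of positive magnitude, in the spirit of a Donsker--Varadhan / log-sum-exp bound. First I would fix the scale $s := L\lambda$ and consider the weighting $\gamma_s$ of $s \cdot \wcal$, which by definition satisfies $\sum_{b \in \wcal} e^{-s\rho(a,b)} \gamma_s(b) = 1$ for every $a \in \wcal$, where $\rho$ is the Euclidean norm. The crucial algebraic observation is that dropping the negative contributions only increases the sum, so $1 \leq \sum_b e^{-s\rho(a,b)} \gamma_s(b)_+$ for every $a$.

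Next I would introduce the auxiliary random variables $Y_w := \tfrac{1}{m}\sum_{i=1}^m \epsilon_i \ell(w, z_i)$, which by \Cref{ass:lipschitz} are $L$-Lipschitz in $w$ uniformly in the Rademacher variables $\epsilon$. Multiplying the inequality above by $e^{\lambda Y_a}$, substituting inside the sum, and using the Lipschitz bound $Y_a - L\rho(a,b) \leq Y_b$, I get the key pointwise inequality
\begin{align*}
e^{\lambda Y_a} \leq \sum_{b \in \wcal} \gamma_s(b)_+ e^{\lambda Y_b}.
\end{align*}
Taking the supremum over $a$ and then the logarithm yields $\lambda \sup_{a \in \wcal} Y_a \leq \log \sum_b \gamma_s(b)_+ e^{\lambda Y_b}$.

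I would then take expectations over $\boldsymbol{\epsilon}$, moving the expectation inside the logarithm via Jensen's inequality (concavity of $\log$), to get $\lambda\, \mathbf{Rad}_S(\wcal) \leq \log \sum_b \gamma_s(b)_+ \mathbb{E}_{\boldsymbol{\epsilon}}[e^{\lambda Y_b}]$. Since $|\ell(b,z_i)| \leq B$ by \Cref{ass:boundedness}, Hoeffding's lemma applied coordinate-wise to the independent Rademacher variables gives $\mathbb{E}_{\boldsymbol{\epsilon}}[e^{\lambda Y_b}] \leq e^{\lambda^2 B^2/(2m)}$. Factoring this deterministic constant out of the sum produces $\mathbf{PMag}_{\normof{\cdot}}(L\lambda \cdot \wcal) = \sum_b \gamma_s(b)_+$, and dividing by $\lambda$ yields exactly the claimed bound.

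The main subtlety to navigate is the possibly negative weighting $\gamma_s$: one cannot simply write $\gamma_s(b) e^{\lambda Y_a - L\lambda\rho(a,b)} \leq \gamma_s(b) e^{\lambda Y_b}$ term by term, since the inequality reverses on indices where $\gamma_s(b) < 0$. The remedy is precisely the passage from the equality $\sum_b e^{-s\rho(a,b)}\gamma_s(b) = 1$ to the inequality $1 \leq \sum_b e^{-s\rho(a,b)}\gamma_s(b)_+$, which simultaneously discards the problematic negative terms in a monotone fashion and explains why the \emph{positive} magnitude (rather than ordinary magnitude) is the right quantity to appear in the bound.
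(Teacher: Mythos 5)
Your proposal is correct and follows essentially the same route as the paper's proof: both start from the weighting identity $\sum_b e^{-\lambda L\|a-b\|}\gamma_{\lambda L}(b)=1$, pass to positive parts so that the termwise Lipschitz substitution $Y_a - L\|a-b\| \le Y_b$ is valid, evaluate at the (sup-attaining) $a$, and finish with Jensen's inequality and Hoeffding's lemma to extract $\lambda B^2/(2m)$ and $\log\mathbf{PMag}(L\lambda\cdot\wcal)$. The only cosmetic differences are that you take a supremum over $a$ where the paper plugs in the explicit argmax $a(\epsilon)$, and your closing remark correctly identifies the role of the positive part, which is exactly the point the paper's first inequality relies on.
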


This result was obtained in the proof of \cite[Theorem 3.5]{andreeva2024topological} under a quite general Lipschitz regularity condition. For the sake of completeness, we present the proof in our particular case, as it might be more concrete for some readers.

\begin{proof}
    Let us fix $S := (z_1,\dots,z_m) \in \zcal^m$.

    It was shown by \cite{meckes2015magnitude} that a finite set $\wcal$ has magnitude. More precisely, for any $\lambda>0$ there exists a vector $\beta_s : \wcal \to \R$, such that for any $a \in \wcal$ we have:
    \begin{align*}
        1 = \sum_{b\in\wcal} e^{-\lambda \normof{a - b}} \beta_\lambda(b).
    \end{align*}
    Let $\epsilon = (\epsilon_1,\dots,\epsilon_m) \sim \mathcal{U}(\{-1, 1\})^{\otimes m}$ be Rademacher random variables (see \Cref{def:rademacher-complexity}). By Lipschitz continuity, we have for all $a \in \wcal$ that
    \begin{align*}
        1 &\leq \sum_{b\in\wcal} e^{-\lambda L \normof{a - b}} \beta_{\lambda L}(b)_+ \\
        &\leq \sum_{b\in\wcal} \exp \left( -\frac{\lambda}{m} \sum_{i=1}^m  |\ell(a,z_i) - \ell(b,z_i)|  \right) \beta_{\lambda L}(b)_+ \\
        &\leq \sum_{b\in\wcal} \exp \left( -\frac{\lambda}{m} \sum_{i=1}^m \epsilon_i (\ell(a,z_i) - \ell(b,z_i))  \right) \beta_{\lambda L}(b)_+.
    \end{align*}
    Therefore:
    \begin{align}
        \exp \left( \frac{\lambda}{m} \sum_{i=1}^m \epsilon_i \ell(a,z_i)  \right) \leq \sum_{b\in\wcal}  \exp \left( \frac{\lambda}{m} \sum_{i=1}^m \epsilon_i \ell(b,z_i)  \right) \beta_{\lambda L}(b)_+.
    \end{align}
    We apply it to the following choice of $a$:
    \begin{align*}
        a = a(\epsilon) := \argmax_{a \in \wcal} \frac{\lambda}{m} \sum_{i=1}^m \epsilon_i \ell(a,z_i).
    \end{align*}
    Thus, by Jensen's inequality, we have:
    \begin{align*}
        \mathbf{Rad}_S(\wcal) &= \Eof[\epsilon]{\max_{a\in\wcal} \frac1{m} \sum_{i=1}^m \epsilon_i \ell(a,z_i)} \\
        &= \Eof[\epsilon]{\frac1{m} \sum_{i=1}^m \epsilon_i \ell(a(\epsilon),z_i)} \\
        &\leq \frac1{\lambda} \log \Eof[\epsilon]{\exp \left(\frac{\lambda}{m} \sum_{i=1}^m \epsilon_i \ell(a(\epsilon),z_i) \right)} \\
        &\leq \frac1{\lambda} \log \Eof[\epsilon]{\sum_{b\in\wcal}  \exp \left( \frac{\lambda}{m} \sum_{i=1}^m \epsilon_i \ell(b,z_i)  \right) \beta_{\lambda L}(b)_+}.
    \end{align*}
    By \Cref{ass:boundedness}, independence, and Hoeffding's lemma, we obtain:
    \begin{align*}
         \mathbf{Rad}_S(\wcal) &\leq \frac1{\lambda} \log \sum_{b\in\wcal} \prod_{i=1}^m \exp \left( \frac{4 B^2 \lambda^2}{8 m^2}  \right) \beta_{\lambda L}(b)_+ \\
         &= \frac1{\lambda} \log \left(  \exp \left( \frac{ B^2 \lambda^2}{2 m}  \right)\sum_{b\in\wcal}  \beta_{\lambda L}(b)_+ \right) \\
         &= \frac{\lambda B^2}{2m} + \frac1{\lambda} \log \mathbf{PMag}(L\lambda \cdot \wcal),
    \end{align*}
    which is the desired result.
\end{proof}

We can finally prove \Cref{thm:both-topological-bounds}.

\thmTopologicalBounds*

\begin{proof}

    \textbf{Persistent homology bound}.   
    By \Cref{lemma:rademacher-bound}, we have (recall that $G_S$ has been defined in \Cref{eq:worst-case-gen}):
    \begin{align*}
        \Eof{G_S(\wcal_{S,U})} \leq 2 \Eof{\mathbf{Rad}_{\Tilde{S}_J} (\wcal_{S,U})} + 2J\beta_n,
    \end{align*}
    with the notations of \Cref{lemma:rademacher-bound} for $\Tilde{S}_J$.
    Now we can apply \Cref{lemma:E-alpha-lemma} to obtain that:
    \begin{align*}
         \Eof{G_S(\wcal_{S,U})} \leq 2 \left( \frac{B}{\sqrt{J}} + B \Eof{\sqrt{\frac{2 \log (1 + K_{n,\alpha} \mathbf{E}^\alpha (\wcal_{S,U}))}{J}}} \right) + 2J\beta_n,
    \end{align*}
    We make the choice $J := \beta_n^{-2/3}$, which we assumed for simplicity is an integer dividing $n$, so that it is a valid choice. This immediately gives the result.

    \textbf{Positive magnitude bound}
    By \Cref{lemma:rademacher-bound}, we have (recall that $G_S$ has been defined in \Cref{eq:worst-case-gen}):
    \begin{align*}
        \Eof{G_S(\wcal_{S,U})} \leq 2 \Eof{\mathbf{Rad}_{\Tilde{S}_J} (\wcal_{S,U})} + 2J\beta_n,
    \end{align*}
    with the notations of \Cref{lemma:rademacher-bound} for $\Tilde{S}_J$.
    Now we can apply \Cref{lemma:magnitude-lemma} to obtain that:
    \begin{align*}
         \Eof{G_S(\wcal_{S,U})} \leq 2 \Eof{ \inf_{\lambda>0} \left( \frac{\lambda B^2}{2J} + \frac1{\lambda} \log \left( \mathbf{PMag}(\lambda L_{S,U} \wcal_{S,U}) \right) \right) + 2J\beta_n } .
    \end{align*}
    We make the change of variable $\lambda \to \lambda \sqrt{J} / B$, which gives that for any $\lambda > 0$, we have:
    \begin{align*}
         \Eof{G_S(\wcal_{S,U})} \leq \Eof{  \frac{\lambda B}{\sqrt{J}} + \frac{2B}{\lambda \sqrt{J}} \log  \mathbf{PMag}(\frac{\lambda L_{S,U} \sqrt{J}}{B} \wcal_{S,U}) + 2J\beta_n } .
    \end{align*}
    We make the choice $J := \beta_n^{-2/3}$, which we assumed for simplicity is an integer dividing $n$, so that it is a valid choice. This immediately gives the result.
\end{proof}

To complete this section, we also show \Cref{lemma:euclidean-stability-implies-our-stability}

\subsection{Proof of \Cref{lemma:euclidean-stability-implies-our-stability}}

\StabilityLemma*

\begin{proof}
    Consider a data-dependent selection of $\mathcal{A}$, denoted $\omega(\wcal,S)$, in the sense of \Cref{def:selection}.
    We consider a measurable mapping $\omega' : \mathrm{CL}(\mathbb{R}^d) \times \mathbb{R}^d \to \mathbb{R}^d$ such that for any closed set $\wcal \subset \Rd$ and any $w\in \Rd$, we have:
    \begin{align*}
        \omega'(\wcal, w) \in \argmin_{w'\in\wcal} \left( \sup_{z\in\zcal} | \ell(w,z) - \ell(w',z) | \right).
    \end{align*}
    Such a mapping can be constructed in our setting where the sets are compact and the loss continuous.
    Consider two datasets $S,  S'$ in $\zcal^n$ differing by $J$ elements. Let us denote:
    \begin{align*}
        \varepsilon := \sup_{z \in \zcal}\mathbb{E}_U \left[\left| \ell(\omega(\wcal_{S,U}, S), z) - \ell(\omega'(\wcal_{S',U},\omega(\wcal_{S,U},S)), z ) \right|\right]  
    \end{align*}
    We use the definition of $\omega'$ and the Lipschitz-continuity of $\ell$ to obtain that:
    \begin{align*}
        \varepsilon &\leq \sup_{z \in \zcal}\mathbb{E}_U \left[ \max_{1\leq k \leq K} \left|\ell(\mathcal{A}_k(S,U), z) - \ell(\omega'(\wcal_{S',U},\mathcal{A}_k(S,U)), z ) \right|\right] \\
        &\leq \sup_{z \in \zcal}\sum_{k=1}^K\mathbb{E}_U \left[  \left|\ell(\mathcal{A}_k(S,U), z) - \ell(\omega'(\wcal_{S',U},\mathcal{A}_k(S,U)), z ) \right|\right] \\
        &\leq \sum_{k=1}^K \mathbb{E}_U \left[ \sup_{z \in \zcal}  \left|\ell(\mathcal{A}_k(S,U), z) - \ell(\omega'(\wcal_{S',U},\mathcal{A}_k(S,U)), z ) \right|\right] \\
        &\leq \sum_{k=1}^K\mathbb{E}_U  \left[ \sup_{z \in \zcal} \left|\ell(\mathcal{A}_k(S,U), z) - \ell(\mathcal{A}_k(S',U), z) \right|\right] \\
        & \leq L \sum_{k=1}^K \mathbb{E}_U \left[  \normof{\mathcal{A}_k(S,U) - \mathcal{A}_k(S',U)} \right] \\
        &= L J\sum_{k=1}^K \delta_k,
    \end{align*}
    where the last step follows by applying the stability assumption $J$ times and the triangle inequality. The result follows by definition of the random trajectory stability assumption.
    \end{proof}

\section{Additional Experimental Details}
\label{sec:empirical-details-appendix}

This section expands on our experimental analyis, providing additional discussion of the methodology, implementation details, and technical setup. 

\subsection{Implementation Details}
\label{app_sec:implementation}
We implement training and validation loops in PyTorch Lightning Version $2.4$ using pytorch version $2.4.1$.
Both models are trained using the AdamW optimizer with a cosine annealing learning rate scheduler. The training was stopped after reaching an accuracy of $95 \%$. No regularization techniques were used to prevent confounding factors in each fine-tuning task. 

\paragraph{Vision Transformer (ViT)}
The ViT is configured for image classification on the CIFAR100 dataset. 
For the Vision Transformer on CIFAR100, we utilize a model with approximately 1.2 million parameters. 
This represents a reduction in size compared to the ViT experiments in \cite{andreeva2024topological}, as the empirical analyses we conduct require substantially more experiments due to the expansion of training set size $n$ and thus storage of weight iterates.

The ViT processes 32x32 pixel images with 3 color channels, and a patch size of 4x4.
The transformer backbone consists of 6 layers, 8 attention heads and an MLP dimension of 512. 
Dropout rates are set to 0.1 within the transformer blocks. 
Training is performed with a base learning rate of $0.0005$, decaying to a minimum of $10^{-5}$ via cosine annealing. 

\paragraph{GraphSage}
We analyze GraphSAGE \cite{hamilton2017inductive} on the graph classification task for MNISTSuperpixels \cite{monti2017geometric}.
This model consists of approximately 0.2 million parameters. It processes graphs where each node has a single input feature. 
These features are initially projected to a hidden dimension of 128. 
For graph-level predictions, node embeddings are aggregated using `mean' global pooling, followed by a linear layer for classification into 10 classes. 
The model uses 2-dimensional positional encodings (representing x,y coordinates), which are added to the node features. 
We use 4 GraphSAGE convolutional layers with a dropout rate of 0.05, a base learning rate of 0.001, and weight decay of $0.0005$ to achieve an adequate generalization gap.

\subsection{Stability Analysis}
\label{sec:stability_experiment_details}
We first elaborate on our procedure to estimate random set  stability in \Cref{sec:empirical_evaluations}.
The overall algorithm, a proxy to asses \Cref{sec:empirical_evaluations}, is summarized in \Cref{algorithm:stability}.
Stability experiments were performed with a fixed learning rate ($\eta$) and batch size ($b$), while varying random seeds (e.g. $\{100,200,300,400,500\}$. The same setup is then conducted twice: once with the replacement set of size $J = 50$, and once without. Since \Cref{ass:random-trajectory-stability} requires taking the supremum over all samples $Z \in \mathcal{Z}$, we approximate it by evaluating the stability parameter on 500 samples seen during training and 500 unseen samples. In \Cref{table:vit_graphsage_bounds}, we report the maximum value across both sets.

\begin{algorithm}
\caption{Stability Parameter Estimation}\label{alg:stability_parameter}
\begin{algorithmic}[1]
\Require $S = \{z_1, \dots, z_n\}$ 
\Require $S_{J} = \{z'_1, \dots, z'_J\}$ 
\Require $S_{J} \cap S = \emptyset$ 
\Require $J \leq n$
\Procedure{Procedure}{Training}
\State Let $I = \{i_1,...,i_{J} \}$ be an uniformly random subset of $\{1,...,n\}$ 
\Comment{Indices to replace}
\State $S_{new} := S\setminus \{z_i : i \in I\} \cup S_{J} := \{z_{new,1},...,z_{new,n} \}$
\State Generate $\mathcal{W}_{S,U}$ with $|\mathcal{W}_{S,U}| = T$  
\State Generate $\mathcal{W}_{S_{new},U}$ with $|\mathcal{W}_{S_{new},U}| = T$ 
\EndProcedure
\Procedure{Procedure}{Estimation}
\State $M = \min \{n,500 \}$
\State $L = \{j_1,...,j_M \}$ be an uniform random subset of $\{1,...,n\}$ 
\State $S_{eval} := \{z_{new,i} : i \in L \} \setminus S_{J}$
\State $N := |S_{eval}| = M - J$
\\
\State Initialize $A^1,...,A^{N} \in \mathbb{R}^{T \times T}$
\State Let $w_{l} \in \mathcal{W}_{S,U} \ \forall l \in \{1,...,T\}$ 
\State Let $w_{new,l} \in \mathcal{W}_{S_{new},U} \ \forall l \in \{1,...,T\}$ 
\State Let $\ell(\cdot,\cdot)$ denote the used loss function.
\\
\State For all $i \in \{1,...,N \}$ compute 
\For{$j \gets 1$ to $T$}  
    \For{$l \gets 1$ to $T$}  
    \State $A^{i}_{j,l} := |\ell(w_j,Z_i) - \ell(w_{new,l},Z_i)|$ \Comment{$Z_i \in S_{eval}$}
\EndFor
\EndFor
\State Now we compute $A := \frac{1}{N}\sum_{i=1}^N A^{i}$
\\
\State Now we take the minimum over each row in matrix $A$
\State Initialize $a \in \mathbb{R}^T$ and compute 
\For{$m \gets 1$ to $T$}  
  $a_m = \min_{1 \leq l \leq T} A_{m,l}$ 
\EndFor
\State The algorithm ends with $\max_{1 \leq h \leq m} a_h$
\EndProcedure
\end{algorithmic}
\label{algorithm:stability}
\end{algorithm}

\subsection{Bound Estimation}
\label{sec:bound_estimation}
In the first part of \Cref{sec:empirical_evaluations}, we estimate the Rademacher complexity bound stated in \Cref{lemma:rademacher-bound}. In particular, we optimize the quantity  
\begin{align*}
    2\sqrt{\tfrac{2 \log T}{J}} \;+\; 2J\beta_n,
\end{align*}
which is minimized at  
\[
    J^* \;=\; 2^{-\tfrac{1}{3}} \, \beta_n^{-\tfrac{2}{3}} \, (\log T)^{-\tfrac{1}{3}}.
\]
Since $J$ must be a divisor of the sample size $n$ (see \Cref{lemma:rademacher-bound}), we define the effective choice of $J$ as  
\[
    J \;=\; \underset{d \mid n}{\arg\min} \, \big| d - J^* \big|.
\]

\subsection{Topological Complexity Analysis}
\label{subsec:experimental_details_topology}
In this section we detail our approach for weight trajectory analysis in terms of topological quantities, particularly experiments which demonstrate the relationship between stability and our topological complexity.

Following \cite{andreeva2024topological}, we fine-tune each model from a local minimum for an additional 5000 iterations after training to convergence.
We use a reduced parameter grid of $4\times4$ for $\eta$ and $b$  (\cite{andreeva2024topological} use $6\times6$), due to computational constraints arising from the varying the training set size $n$. 
Detailed hyperparameters are listed in \Cref{table:hyperparameter_configurations}.
\begin{table*}[h!]
    \centering
    \small
    \setlength{\tabcolsep}{7pt} 
        \begin{tabular}{lll} 
            \toprule
            \textbf{Hyperparameter} & \textbf{Vision Transformer}                 & \textbf{GraphSage}                          \\
            \midrule
            Sample Size ($n$)       & $\{100, 500, 1000, 5000, 10000\}$      & $\{100, 500, 1000, 5000, 10000\}$      \\
            Learning Rate ($\eta$)  & $\{10^{-6},10^{-5}, 5 \times 10^{-5},   10^{-4} \}$                  & $\{10^{-5}, 10^{-4}, 5 \times 10^{-4}, 10^{-3}\}$                  \\
            Batch Size ($b$)        & $\{32, 64, 128, 256 \}$               & $\{32, 64, 128, 256 \}$               \\
            \bottomrule
        \end{tabular}
    \caption{Summary of hyperparameter configurations for Vision Transformer and GraphSage models.}
    \label{table:hyperparameter_configurations}
\end{table*}
\subsection{Hardware and Runtime}
\label{app_sec:hardware_runtime}
Models are trained on a cluster with 40GB A40/A100 GPUs; although the experiments are reproducible with lower memory requirements. For our grid experiments, computing topological features alongside hyperparameter variations (sample size, learning rate, and batch size) demanded approximately 57 GPU hours for GraphSage and 140 GPU hours for the Vision Transformer (ViT). In contrast, the stability analysis required about 50 total GPU hours.

\section{Additional Empirical Results}
\label{sec:additional-empirical-results}

In this section, we present additional empirical results. The first part complements the second part of \Cref{sec:empirical_evaluations}. In the main text, we focused on the $\alpha$-weighted lifetime sum in \Cref{thm:both-topological-bounds}. The second bound, given in terms of the positive magnitude $\mathbf{PMag}$ will be discussed here. 

Motivated by the dependence of the stability parameter in \Cref{ass:random-trajectory-stability} on the sample size $n$, we also investigate the topological complexity measures introduced in \Cref{thm:both-topological-bounds}. To the best of our knowledge, this behavior has not been addressed in previous work. We study it here to provide a more complete picture of the $\alpha$-weighted lifetime sum and the positive magnitude.

\subsection{Interplay between Stability and $\mathbf{C}(\mathcal{W}_{S,U})$}
In this subsection, in addition to the results presented in \Cref{sec:empirical_evaluations}, we provide the results specifically for the \emph{positive magnitude}. The second bound in \Cref{thm:both-topological-bounds} depends on the scaling parameter $s(\lambda)$. Recall that in \Cref{thm:both-topological-bounds}, the parameter $s(\lambda)$ can be chosen arbitrarily. The theoretically justified choice is $s(\lambda) = \mathcal{O}(n^{-1/3})$ in order to obtain the optimal convergence rate. Nevertheless, alternative choices of $s(\lambda)$ have also led to remarkable correlation results in previous work \cite{andreeva2024topological}.We focus on the setting $s(\lambda) = n^{-1/3}$ with $n \in \{100, 10^{4}\}$.

\paragraph{Results and Discussion} We observe the same behavior as in the main text (\Cref{sec:empirical_evaluations}). We note that the effects for GraphSage \Cref{fig:pos_magnitude_sum_correlation_graph_t=4} and \Cref{fig:pos_magnitude_sum_correlation_graph_t=21} are less pronounced than for the ViT (\Cref{fig:pos_magnitude_sum_correlation_vit_t=4} and \Cref{fig:pos_magnitude_sum_correlation_vit_t=21}); nonetheless, the general trend remains visible. Interestingly, for $s(\lambda) = 0.01$, we observe a different behavior. While for higher values of $s(\lambda)$ the steepness varies with the sample size, as predicted by \Cref{thm:both-topological-bounds} and also observed for $\mathbf{E}^1$, the steepness remains consistent across different values of $n$. 

The same behavior for $\mathbf{PMag}(s(\lambda) \cdot \wcal_{S,U})$, with $s(\lambda) = n^{\frac{1}{3}}$, can be explained theoretically. In \Cref{thm:both-topological-bounds}, we observe the product structure. Indeed, \Cref{thm:both-topological-bounds} asserts that $\log \mathbf{E}^1(\wcal_{S,U})$ should be approximately of order at least $\beta_n^{-1/3} G_S(\wcal_{S,U})$, which corresponds to $n^{1/3} G_S(\wcal_{S,U})$ when $\beta_n = \Theta(1/n)$. Therefore, our experimental results provide strong empirical support for \Cref{thm:both-topological-bounds} both for $\mathbf{E}^1$ and $\mathbf{PMag}$.

\begin{figure}[h]
    \centering
    \vskip -0.2cm
    \begin{minipage}[t]{0.48\linewidth}
        \centering
        \includegraphics[width=\linewidth]{./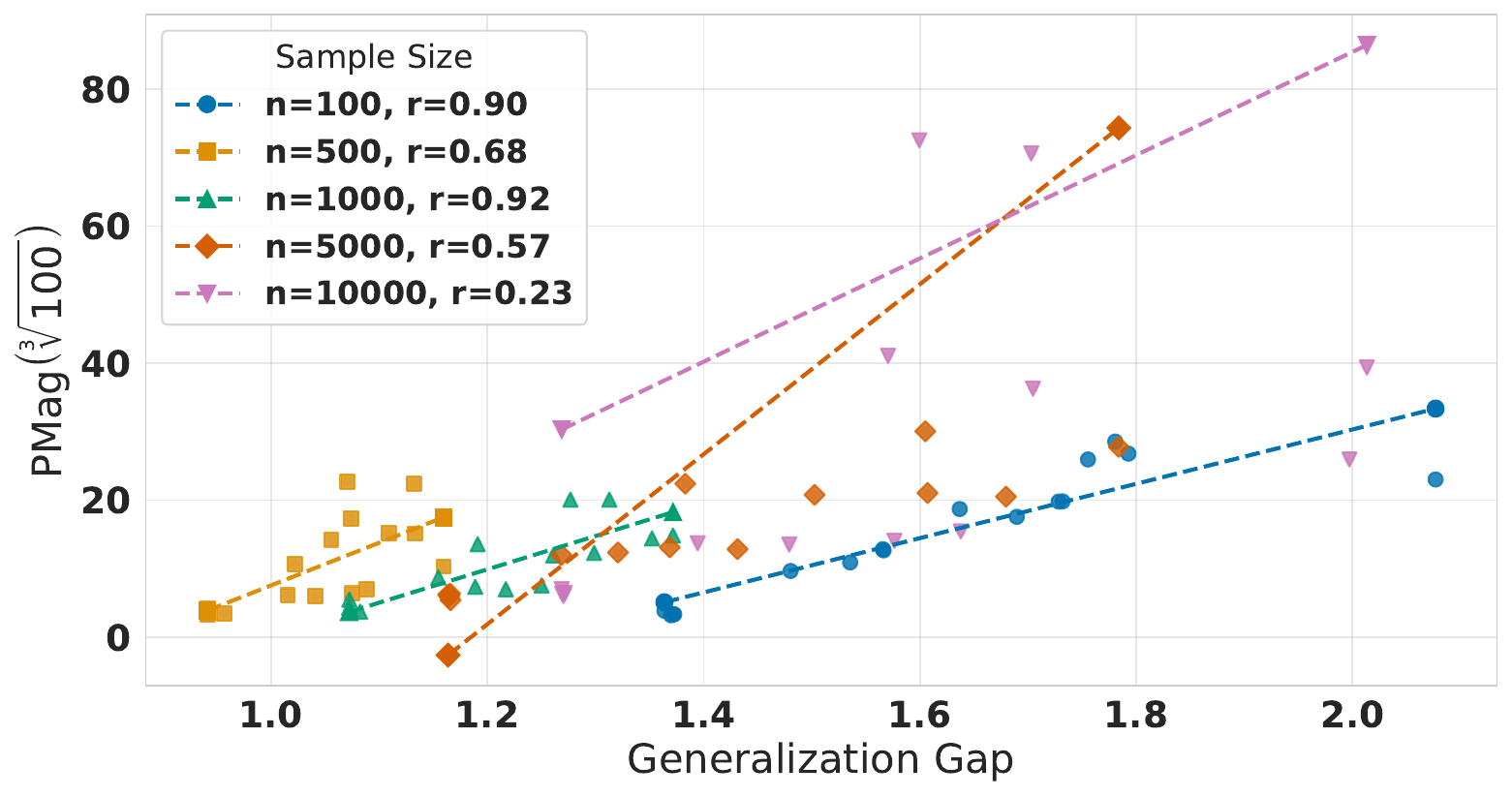}
        \caption{Variation of $\mathbf{PMag}(\sqrt[3]{100} \cdot \mathcal{W}_{S,U})$ with sample size $n$ for the model \textbf{GraphSage}. Pearson correlation coefficients $r$ are reported for each subgroup.}     \label{fig:pos_magnitude_sum_correlation_graph_t=4}
    \end{minipage}
    \hfill
    \begin{minipage}[t]{0.48\linewidth}
        \centering
        \includegraphics[width=\linewidth]{./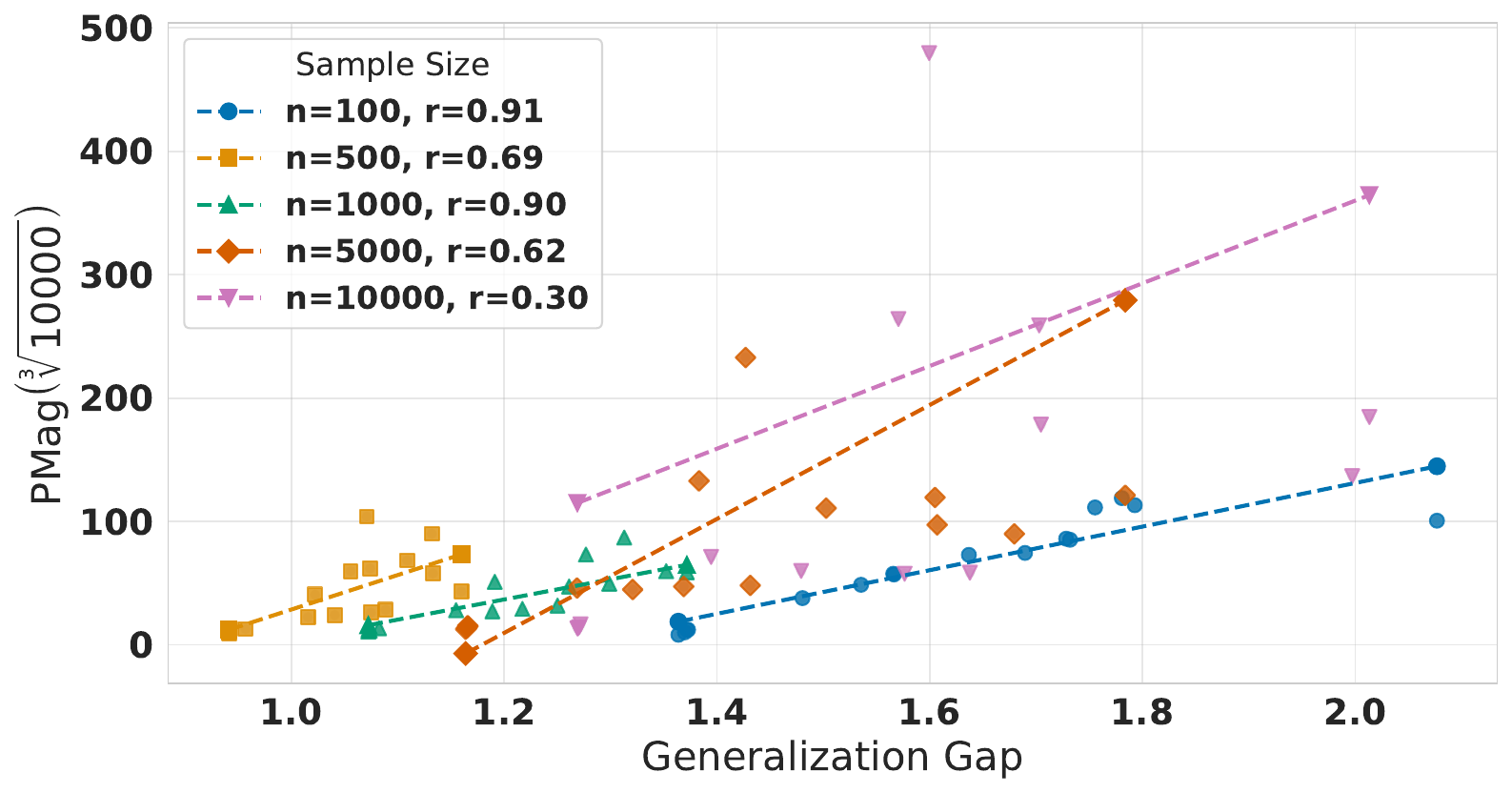}
        \caption{Variation of $\mathbf{PMag}(\sqrt[3]{10000} \cdot \mathcal{W}_{S,U})$ with sample size $n$ for the model \textbf{GraphSage}. Pearson correlation coefficients $r$ are reported for each subgroup.} \label{fig:pos_magnitude_sum_correlation_graph_t=21}
    \end{minipage}
\end{figure}

\begin{figure}[h]
    \centering
    \vskip -0.2cm
    \begin{minipage}[t]{0.48\linewidth}
        \centering
        \includegraphics[width=\linewidth]{./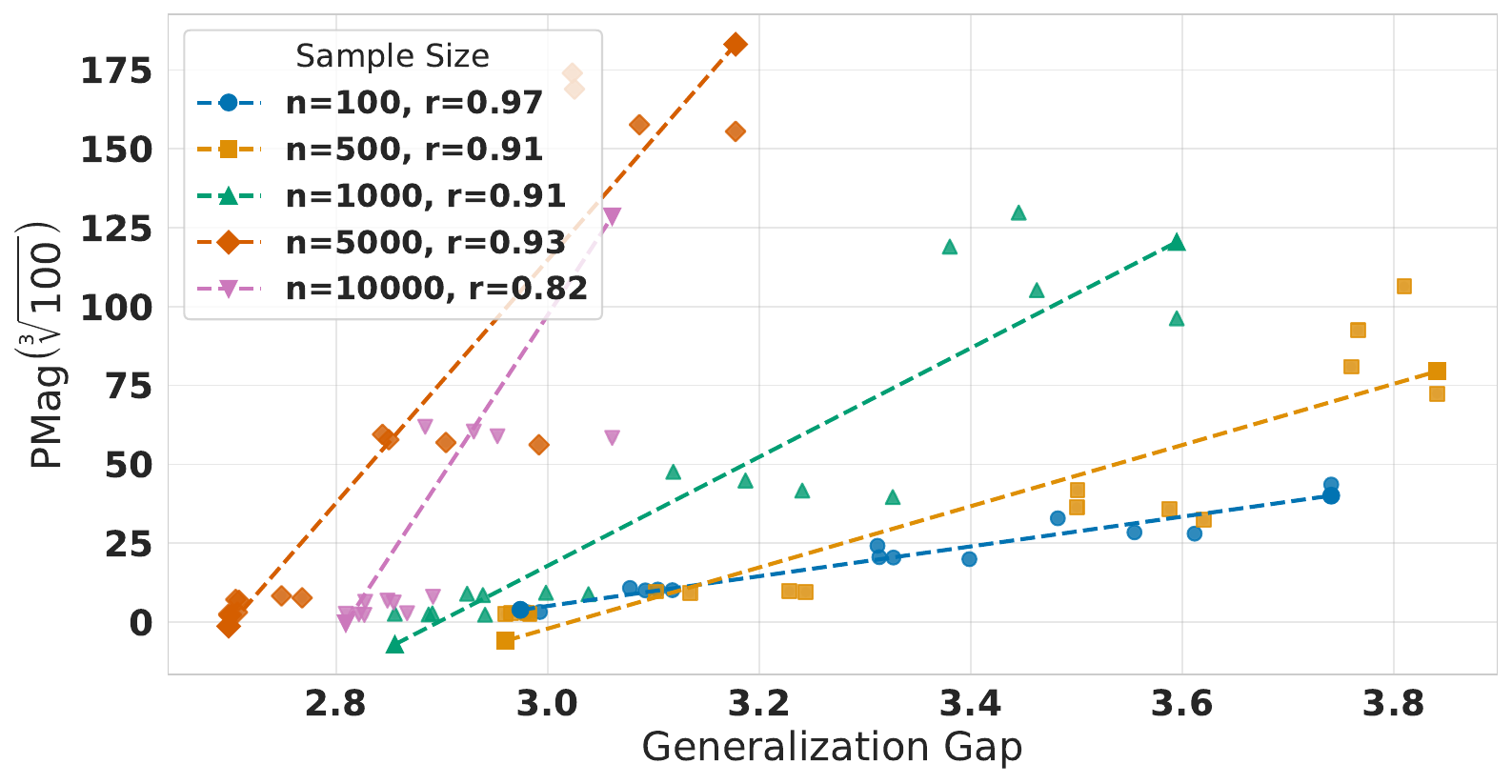}
        \caption{Variation of $\mathbf{PMag}(\sqrt[3]{100})$ with sample size $n$ for the model \textbf{ViT}. Pearson correlation coefficients $r$ are reported for each subgroup.}     \label{fig:pos_magnitude_sum_correlation_vit_t=4}
    \end{minipage}
    \hfill
    \begin{minipage}[t]{0.48\linewidth}
        \centering
        \includegraphics[width=\linewidth]{./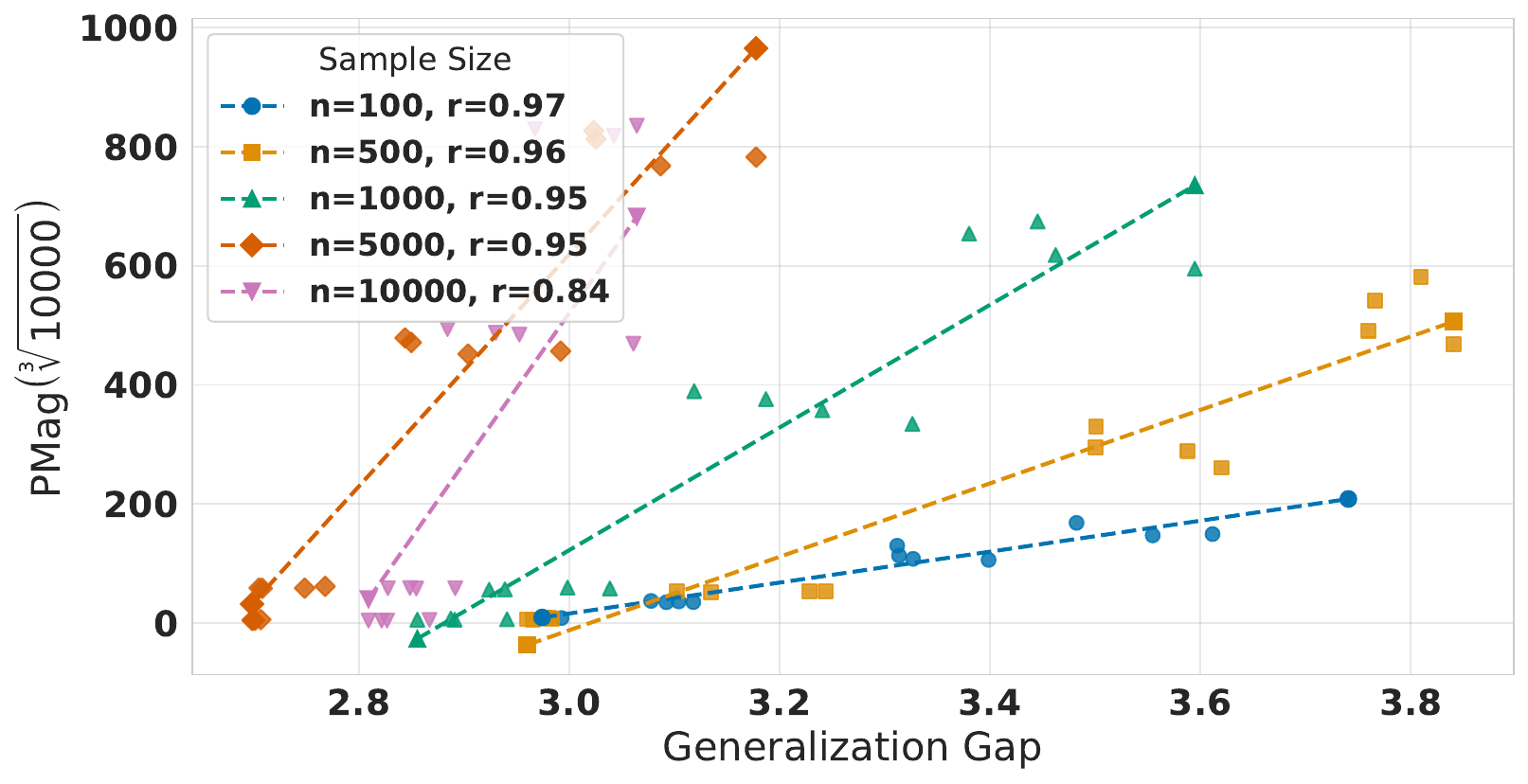}
        \caption{Variation of $\mathbf{PMag}(\sqrt[3]{10000})$ with sample size $n$ for the model \textbf{ViT}. Pearson correlation coefficients $r$ are reported for each subgroup.} \label{fig:pos_magnitude_sum_correlation_vit_t=21}
    \end{minipage}
\end{figure}

\subsection{Complexity Measures $\mathbf{C}(\mathcal{W}_{S,U})$ as functions of the sample size $n$}
While the main write-up in \Cref{sec:empirical_evaluations} focused on order of the bound values of \Cref{lemma:rademacher-bound} and overall behavior, in this section we investigate the behavior of the topological complexity measure introduced in \cite{andreeva2024topological}. In particular, in \Cref{thm:both-topological-bounds} we recover the same complexity measures $\mathbf{C}(\wcal_{S,U})$ within our stability framework. Motivated by the dependence of the stability parameter in \Cref{ass:random-trajectory-stability} on the sample size $n$, we investigate the \emph{positive magnitude} and the \emph{$\alpha$-weigthed lifetime sum} with respeect to the training data size. 

\paragraph{Experimental Design}
Similarly to \Cref{sec:empirical_evaluations}, we investigate the behavior of two conceptually different models: a Vision Transformer (\textbf{ViT}) and \textbf{GraphSage}. The implementation details remain consistent with the main experimental setting. In particular, we analyze the behavior of $\mathbf{E}^1(\mathcal{W}_{S,U})$ as a function of the sample size $n$, for fixed learning rates $\eta = 10^{-4}$ and $\eta = 10^{-5}$ for ViT, and $\eta = 10^{-3}$ and $\eta = 10^{-4}$ for GraphSage, across all batches. Our focus is on $\mathbf{E}^1(\mathcal{W}_{S,U})$ and $\mathbf{PMag}(s(\lambda) \cdot \mathcal{W}_{S,U})$, with $s(\lambda) = n^{\frac{1}{3}}$ and $n \in \{100, 10^{4}\}$. 

\paragraph{Results and Discussion}
For fixed learning rates, we observe that both $\mathbf{E}^1$ and $\mathbf{PMag}$ increase with increasing $n$ and appear to stabilize for large values of $n$ (see Figures \ref{fig:E_alpha_graph}, \ref{fig:E_alpha_vit}, \ref{fig:pos_magnitude_100_graph}, 
\ref{fig:pos_magnitude_bar_100_vit}, \ref{fig:pos_magnitude_bar_10000_graph}, 
\ref{fig:pos_magnitude_bar_10000_vit}
 . This suggests a relationship between the sample size and the topology of the training trajectory, a phenomenon that has not been previously highlighted in the literature.

At first glance, the increase of $\mathbf{E}^1(W_S, U)$ with $n$ may seem counterintuitive, since we typically expect generalization error (and related bounds) to decrease as the sample size grows. However, this behavior is consistent with theory: the $\alpha$-weigthed lifetime sum is expected to grow at most polynomially with $n$.
The positive magnitude remains bounded in terms of the number of iterations.

\begin{figure}[h]
    \centering
    \vskip -0.2cm
    \begin{minipage}[t]{0.48\linewidth}
        \centering
        \includegraphics[width=\linewidth]{./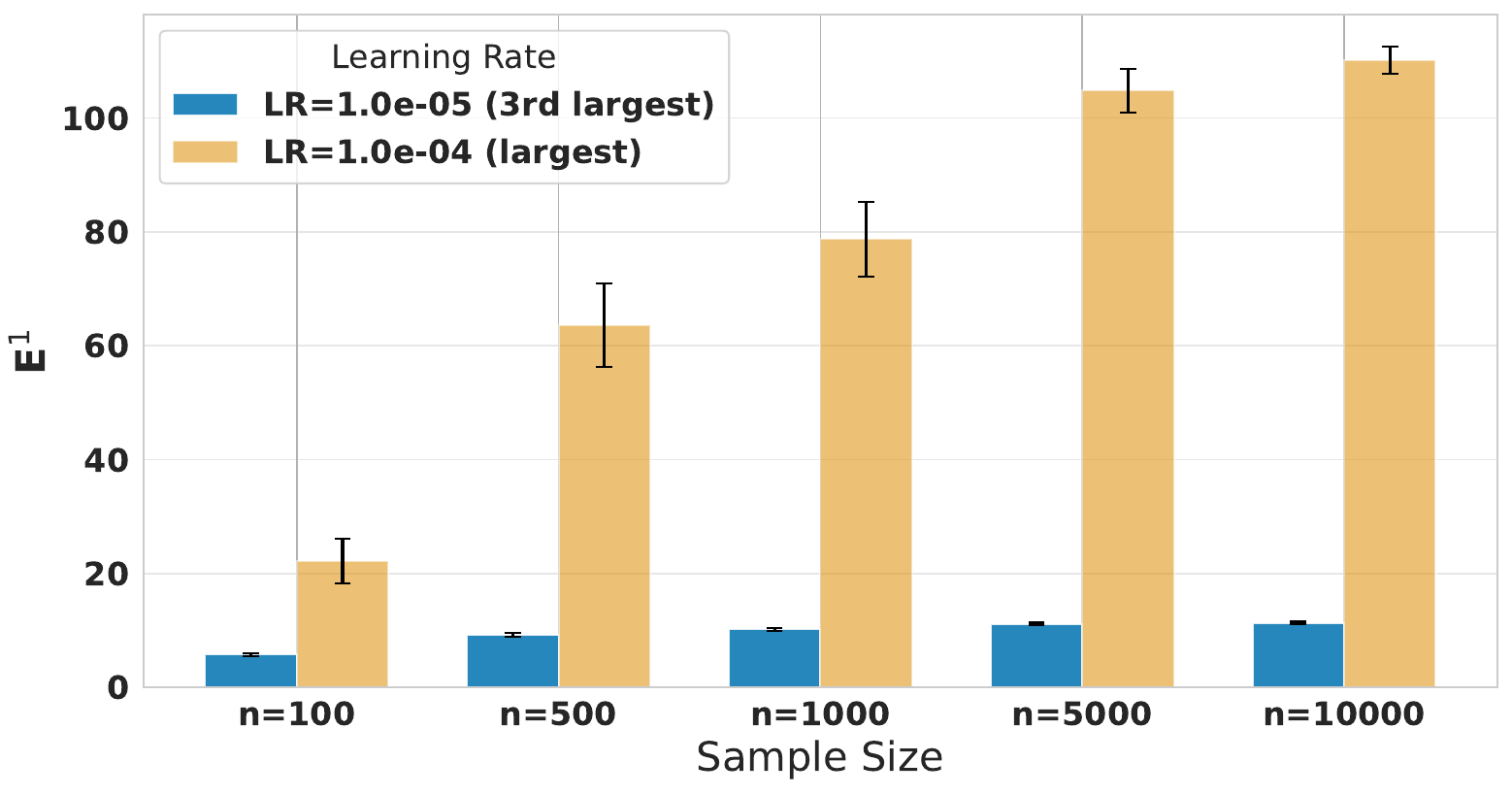}
        \caption{Evolution of $\mathbf{E}^1(\wcal_{S,U})$ with increasing sample size n, for learning rates $\eta = 10^{-5}$ and $\eta = 10^{-4}$ (\textbf{ViT}).}     \label{fig:E_alpha_vit}
    \end{minipage}
    \hfill
    \begin{minipage}[t]{0.48\linewidth}
        \centering
        \includegraphics[width=\linewidth]{./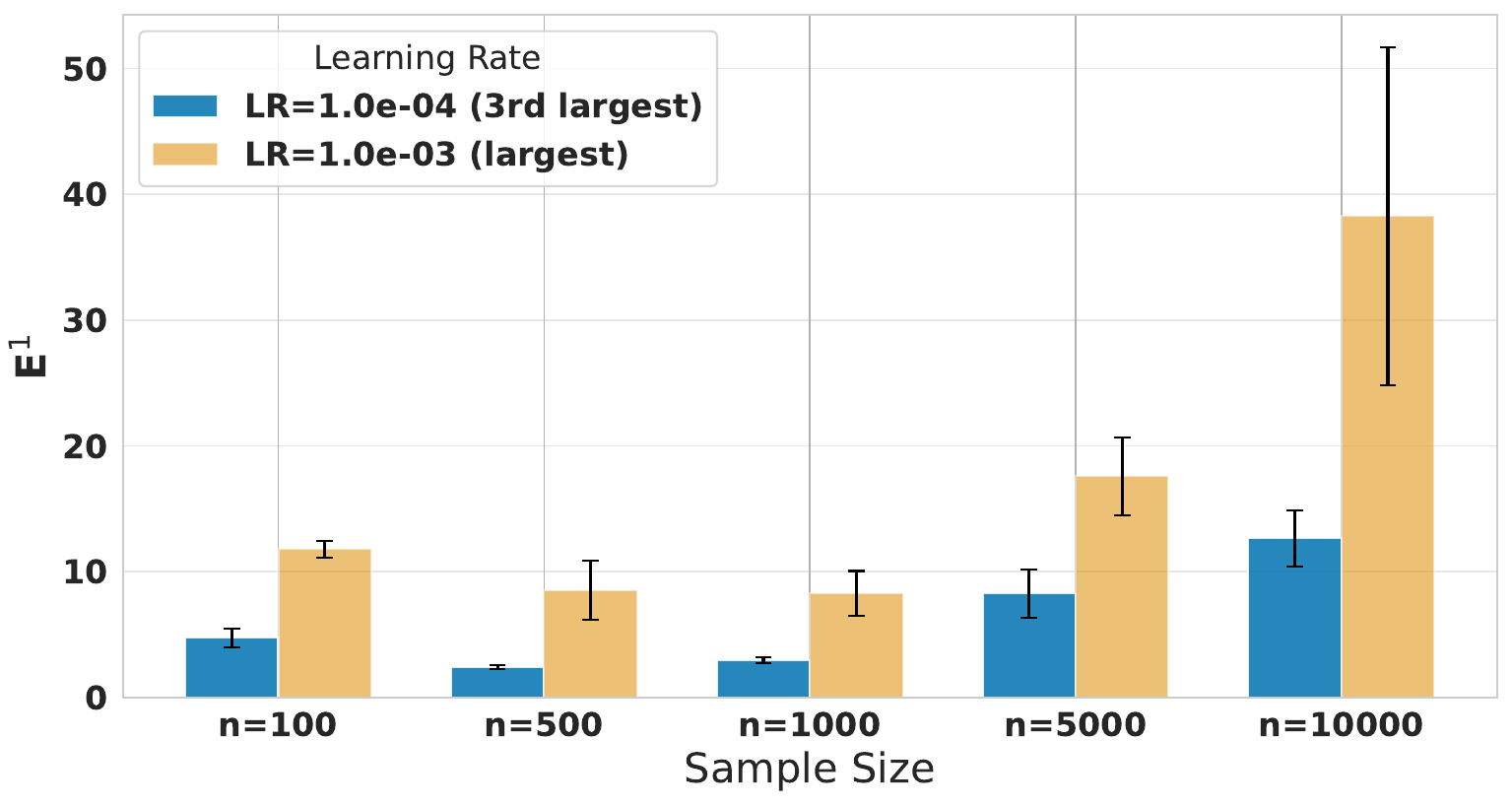}
        \caption{Evolution of $\mathbf{E}^1(\wcal_{S,U})$ with increasing sample size n, for learning rates $\eta = 10^{-3}$ and $\eta = 10^{-4}$ (\textbf{GraphSage}).} 
        \label{fig:E_alpha_graph}
    \end{minipage}
\end{figure}

\begin{figure}[h]
    \centering
    \vskip -0.2cm
    \begin{minipage}[t]{0.48\linewidth}
        \centering
        \includegraphics[width=\linewidth]{./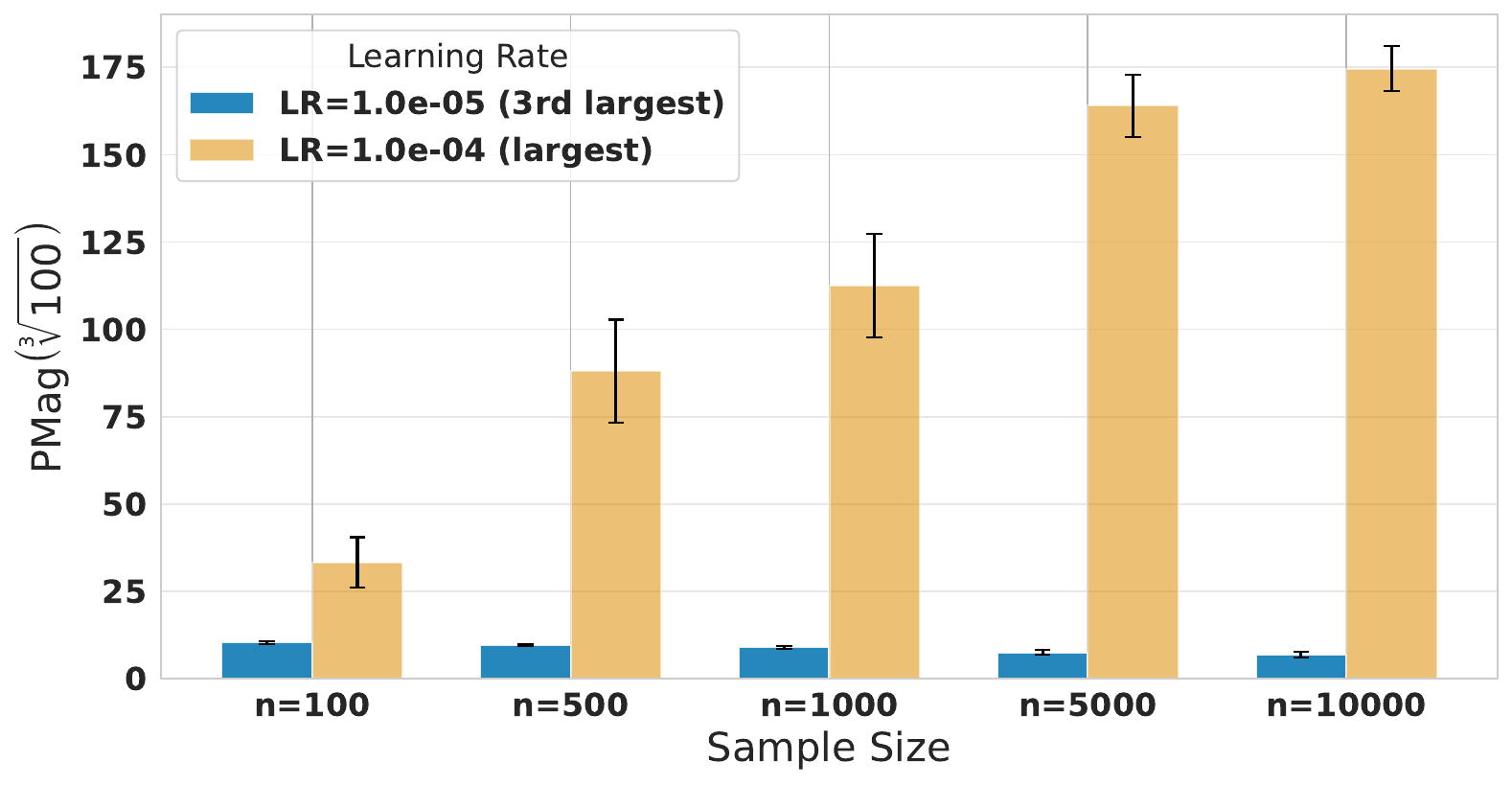}
        \caption{Evolution of $\mathbf{PMag}(\sqrt[3]{100} \cdot \wcal_{S,U})$ with increasing sample size n, for learning rates $\eta = 10^{-5}$ and $\eta = 10^{-4}$ (\textbf{ViT}).}     
        \label{fig:pos_magnitude_bar_100_vit}
    \end{minipage}
    \hfill
    \begin{minipage}[t]{0.48\linewidth}
        \centering
        \includegraphics[width=\linewidth]{./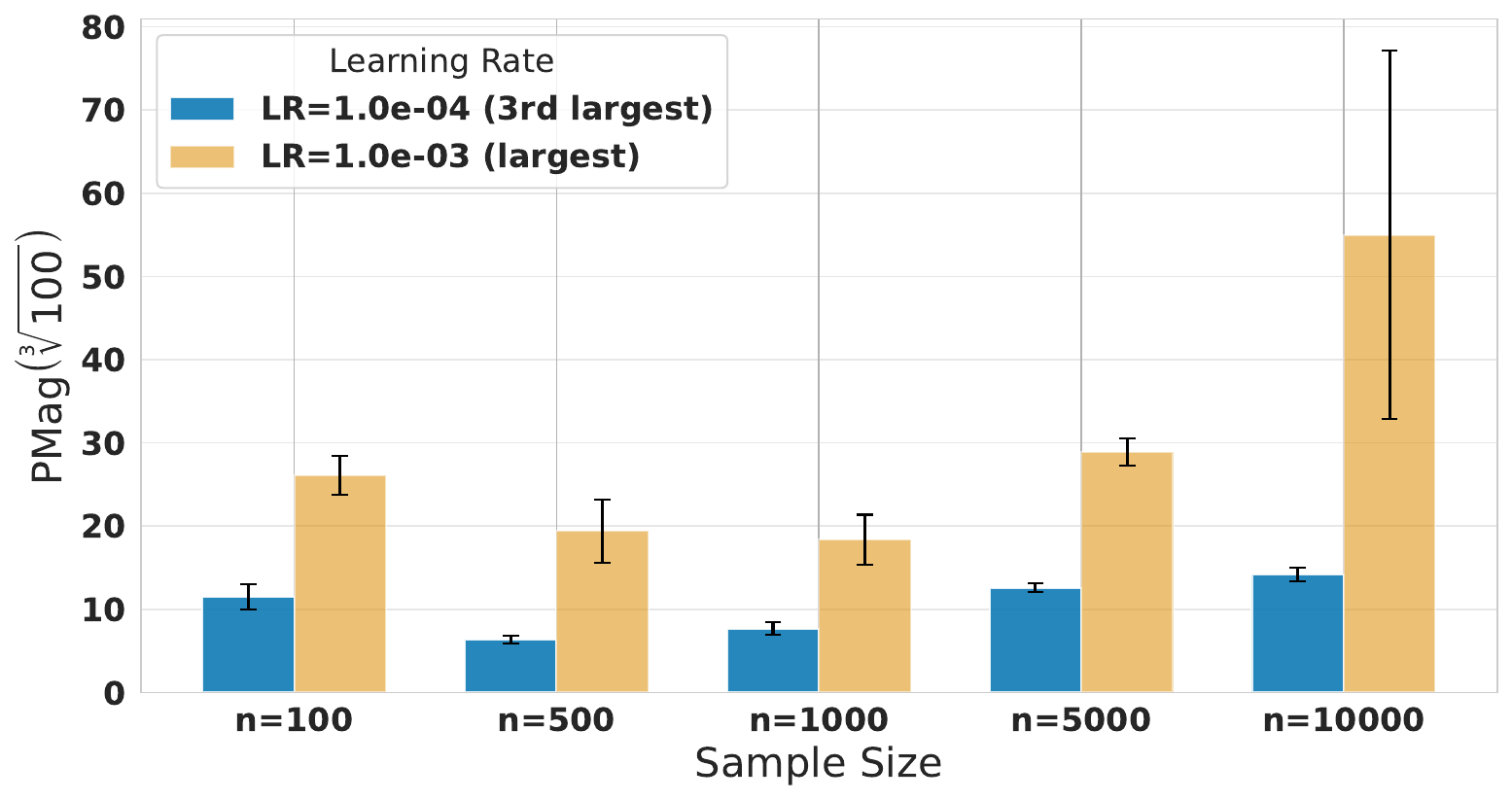}
        \caption{Evolution of $\mathbf{PMag}(\sqrt[3]{100} \cdot \wcal_{S,U})$ with increasing sample size n, for learning rates $\eta = 10^{-3}$ and $\eta = 10^{-4}$ (\textbf{GraphSage}).} 
        \label{fig:pos_magnitude_100_graph}
    \end{minipage}
\end{figure}

\begin{figure}[h]
    \centering
    \vskip -0.2cm
    \begin{minipage}[t]{0.48\linewidth}
        \centering
        \includegraphics[width=\linewidth]{./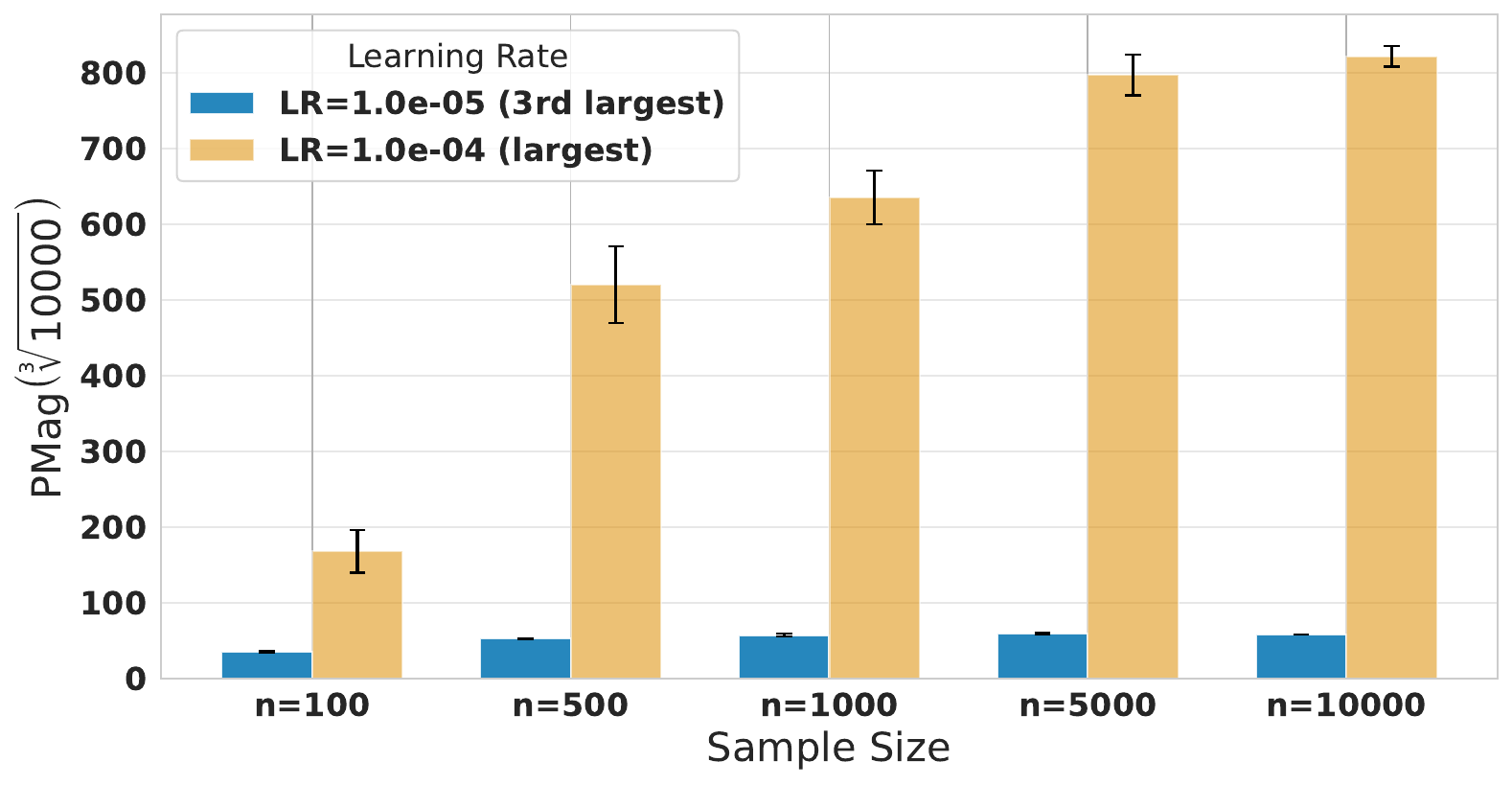}
        \caption{Evolution of $\mathbf{PMag}(\sqrt[3]{10000} \cdot \wcal_{S,U})$ with increasing sample size n, for learning rates $\eta = 10^{-4}$ and $\eta = 10^{-4}$ (\textbf{ViT}).}     \label{fig:pos_magnitude_bar_10000_vit}
    \end{minipage}
    \hfill
    \begin{minipage}[t]{0.48\linewidth}
        \centering
        \includegraphics[width=\linewidth]{./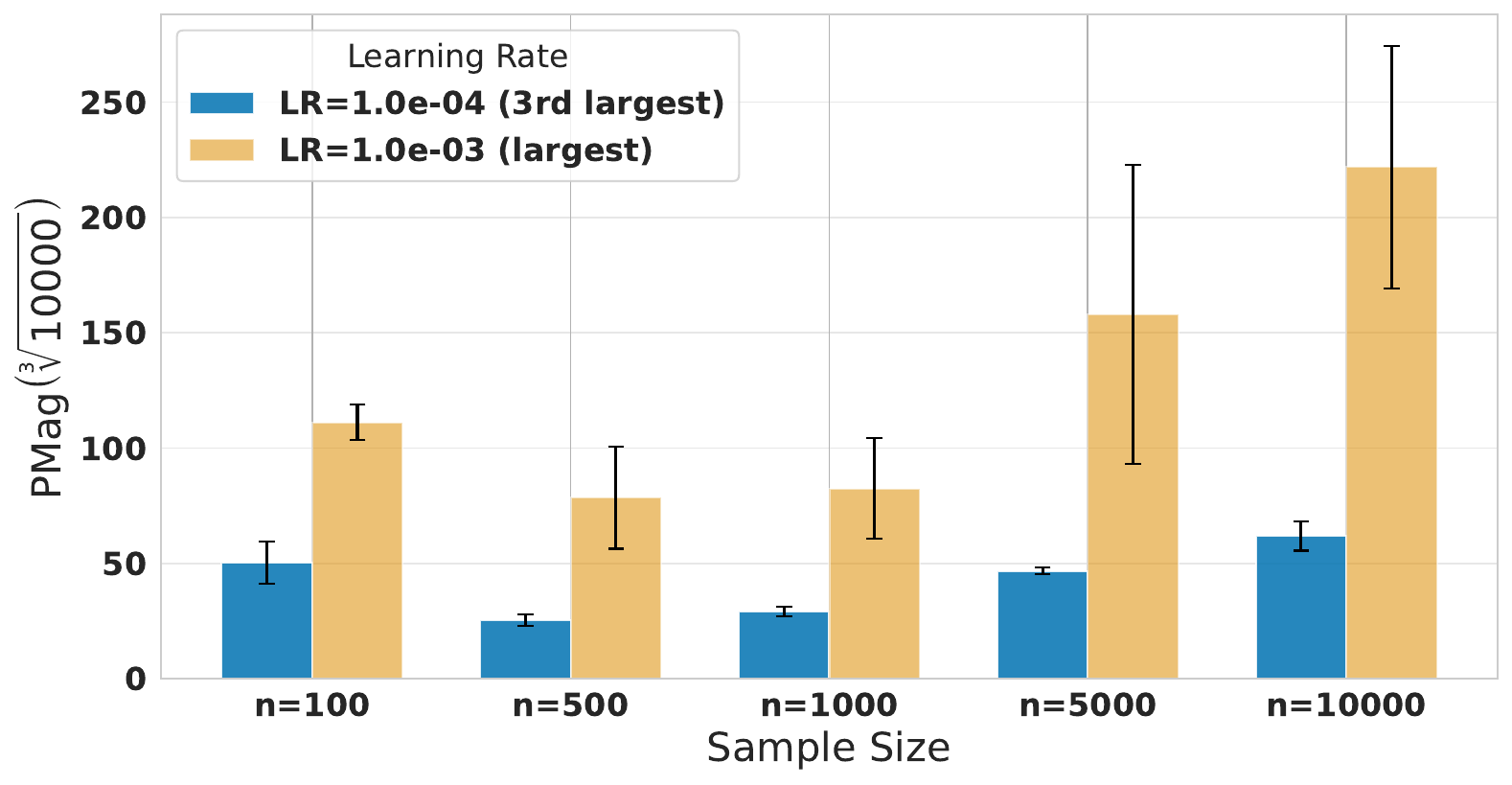}
        \caption{Evolution of $\mathbf{PMag}(\sqrt[3]{10000} \cdot \wcal_{S,U})$ with increasing sample size n, for learning rates $\eta = 10^{-3}$ and $\eta = 10^{-4}$ (\textbf{GraphSage}).} 
        \label{fig:pos_magnitude_bar_10000_graph}
    \end{minipage}
\end{figure}

\end{document}